\def\eqref#1{equation~\ref{#1}}
\def\1{\bm{1}}
\DeclareMathAlphabet{\mathsfit}{\encodingdefault}{\sfdefault}{m}{sl}
\SetMathAlphabet{\mathsfit}{bold}{\encodingdefault}{\sfdefault}{bx}{n}
\def\gA{{\mathcal{A}}}
\def\gF{{\mathcal{F}}}
\def\gP{{\mathcal{P}}}
\def\gR{{\mathcal{R}}}
\def\gS{{\mathcal{S}}}
\def\gT{{\mathcal{T}}}
\def\sN{{\mathbb{N}}}
\def\sP{{\mathbb{P}}}
\def\sR{{\mathbb{R}}}
\newcommand{\sE}{\mathbb{E}}
\theoremstyle{plain}
\newtheorem{theorem}{Theorem}[section]
\newtheorem{lemma}[theorem]{Lemma}
\theoremstyle{definition}
\newtheorem{condition}[theorem]{Condition}
\theoremstyle{remark}
\icmltitlerunning{Single-Trajectory DRRL}
\begin{document}

\twocolumn[
\icmltitle{Single-Trajectory Distributionally Robust Reinforcement Learning}



\icmlsetsymbol{equal}{*}

\begin{icmlauthorlist}
\icmlauthor{Zhipeng Liang}{equal,hkustieda}
\icmlauthor{Xiaoteng Ma}{equal,tsing}
\icmlauthor{Jose Blanchet}{stanford}
\icmlauthor{Jun Yang}{tsing}
\icmlauthor{Jiheng Zhang}{hkustieda,hkustmath}
\icmlauthor{Zhengyuan Zhou}{nyu,arena}
\end{icmlauthorlist}

\icmlaffiliation{hkustieda}{Department of Industrial Engineering and Decision Analytics, Hong Kong University of Science and Technology}
\icmlaffiliation{hkustmath}{Department of Mathematics, Hong Kong University of Science and Technology}
\icmlaffiliation{stanford}{Department of Management Science and Engineering, Stanford University}
\icmlaffiliation{tsing}{Department of Automation, Tsinghua University}
\icmlaffiliation{nyu}{Stern School of Business, New York University}
\icmlaffiliation{arena}{Arena Technologies}

\icmlcorrespondingauthor{Zhengyuan Zhou}{zhengyuanzhou24@gmail.com}

\icmlkeywords{Robust Reinforcement Learning, Distributionally Robust Optimization, Single-Trajectory}

\vskip 0.3in
]
\printAffiliationsAndNotice{\icmlEqualContribution} 



\begin{abstract}
    To mitigate the limitation that the classical reinforcement learning (RL) framework heavily relies on identical training and test environments, Distributionally Robust RL (DRRL) has been proposed to enhance performance across a range of environments, possibly including unknown test environments.
    As a price for robustness gain, DRRL involves optimizing over a set of distributions, which is inherently more challenging than optimizing over a fixed distribution in the non-robust case.
    Existing DRRL algorithms are either model-based or fail to learn from a single sample trajectory.
    In this paper, we design a first fully model-free DRRL algorithm, called \emph{distributionally robust Q-learning with single trajectory (DRQ)}.
    We delicately design a multi-timescale framework to fully utilize each incrementally arriving sample and directly learn the optimal distributionally robust policy without modeling the environment, thus the algorithm can be trained along a single trajectory in a model-free fashion.
    Despite the algorithm's complexity, we provide asymptotic convergence guarantees by generalizing classical stochastic approximation tools. 
    Comprehensive experimental results demonstrate the superior robustness and sample complexity of our proposed algorithm, compared to non-robust methods and other robust RL algorithms.
\end{abstract}

\section{Introduction}
Reinforcement Learning (RL) is a machine learning paradigm for studying sequential decision problems.
Despite considerable progress in recent years \citep{silver2016mastering,mnih2015human,vinyals2019grandmaster}, 
RL algorithms often encounter a discrepancy between training and test environments. This discrepancy is widespread since test environments may be too complex to be perfectly represented in training, or the test environments may inherently shift from the training ones, especially in certain application scenarios, such as financial markets and robotic control.
Overlooking the mismatch could impede the application of RL algorithms in real-world settings, given the known sensitivity of the optimal policy of the Markov Decision Process (MDP) to the model \citep{mannor2004bias,iyengar2005robust}.

To address this concern,
Distributionally Robust RL (DRRL) \citep{zhou2021finitesample,yang2021theoretical,shi2022distributionally, panaganti2022sample, panaganti2022robust, ma2022distributionally,Yang2018WassersteinDR,abdullah2019wasserstein, Neufeld2022RobustQA} formulates the decision problem under the assumption that the test environment varies but remains close to the training environment.
The objective is to design algorithms optimizing the worst-case expected return over an ambiguity set encompassing all possible test distributions. 
Evaluating a DRRL policy necessitates deeper insight into the transition dynamics than evaluating a non-robust one, as it entails searching for the worst-case performance across all distributions within the ambiguity set.
Therefore, most prior solutions are model-based, require the maintenance of an estimator for the entire transition model and the ambiguity set.
Such requirements may render these algorithms less practical in scenarios with large state-action spaces or where adequate modeling of the real environment is unfeasible.

Prompted by this issue, we study a fully model-free DRRL algorithm in this paper, which learns the optimal DR policy without explicit environmental modeling.
The algorithm's distinctive feature is its capacity to learn from a single sample trajectory, representing the least demanding requirement for data collection.
This feature results from our innovative algorithmic framework, comprising incrementally updated estimators and a delicate approximation scheme.
While most model-free non-robust RL algorithms support training in this setting—contributing to their widespread use—no existing work can effectively address the DRRL problem in this way.
The challenge arises from the fact that approximating a DR policy by learning from a single trajectory suffers from restricted control over state-action pairs and limited samples, i.e., only one sample at a time.
As we will demonstrate, a simple plug-in estimator using one sample, which is unbiased in the non-robust $Q$-learning algorithm, fails to approximate any robust value accurately.

The complexity of this task is further affirmed by the sole attempt to develop a model-free DRRL algorithm in \citep{liu2022distributionally}.
It relies on a restricted simulator assumption, enabling the algorithm to access an arbitrary number of samples from any state-action pair, thereby amassing sufficient system dynamics information before addressing the DRRL problem. 
Relaxing the dependence on a simulator and developing a fully model-free algorithm capable of learning from a single trajectory necessitates a delicate one-sample estimator for the DR value, carefully integrated into an algorithmic framework to eradicate bias from insufficient samples and ensure convergence to the optimal policy.
Moreover, current solutions heavily depend on the specific divergence chosen to construct the ambiguity set and fail to bridge different divergences, underscoring the practical importance of divergence selection.

Thus a nature question arises: 
\textit{Is it possible to develop a model-free DRRL framework that can learn the optimal DR policy across different divergences using only a single sample trajectory for learning?}

\subsection{Our Contributions}
In this paper, we provide a positive solution to the aforementioned question by making the following contributions:

\begin{enumerate}[leftmargin=*]
    \item We introduce a pioneering approach to construct the ambiguity set using the Cressie-Read family of $f$-divergence. By leveraging the strong duality form of the corresponding distributionally robust reinforcement learning (DRRL) problem, we reformulate it, allowing for the learning of the optimal DR policies using misspecified MDP samples. This formulation effortlessly covers widely used divergences such as the Kullback-Leibler (KL) and $\chi^2$ divergence.
    \item To address the additional nonlinearity that arises from the DR Bellman equation, which is absent in its non-robust counterpart, we develop a novel multi-timescale stochastic approximation scheme. This scheme carefully exploits the structure of the DR Bellman operator. The update of the $Q$ table occurs in the slowest loop, while the other two loops are delicately designed to mitigate the bias introduced by the plug-in estimator due to the nonlinearity.
    \item We instantiate our framework into a DR variant of the $Q$-learning algorithm, called \emph{distributionally robust $Q$-learning with single trajectory (DRQ)}. This algorithm solves discount Markov Decision Processes (MDPs) in a fully online and incremental manner. We prove the asymptotic convergence of our proposed algorithm by extending the classical two-timescale stochastic approximation framework, which may be of independent interest.
    \item We conduct extensive experiments to showcase the robustness and sample efficiency of the policy learned by our proposed DR $Q$-learning algorithm.
    We also create a deep learning version of our algorithm and compare its performance to representative online and offline (robust) reinforcement learning benchmarks on classical control tasks.
\end{enumerate}

\subsection{Related Work}
\textbf{Robust MDPs and RL:} 
The framework of robust MDPs has been studied in several works such as \citet{nilim2005robust, iyengar2005robust, wiesemann2013robust, lim2013reinforcement,ho2021partial,goyal2022robust}. 
These works discuss the computational issues using dynamic programming with different choices of MDP formulation, as well as the choice of ambiguity set, when the transition model is known. 
Robust Reinforcement Learning (RL) \citep{roy2017reinforcement, badrinath2021robust, wang2021online} relaxes the requirement of accessing to the transition model by simultaneously approximating to the ambiguity set as well as the optimal robust policy, using only the samples from the misspecified MDP. 

\textbf{Online Robust RL:}
Existing online robust RL algorithms including \citet{wang2021online, badrinath2021robust,roy2017reinforcement}, highly relies on the choice of the $R$-contamination model and could suffer over-conservatism. 
This ambiguity set maintains linearity in their corresponding Bellman operator and thus inherits most of the desirable benefits from its non-robust counterpart.
Instead, common distributionally robust ambiguity sets, such as KL or $\chi^2$ divergence ball, suffer from extra nonlinearity when trying to learn along a single-trajectory data, which serves as the foundamental challenge in this paper. 

\textbf{Distributionally Robust RL:}
To tackle the over-conservatism aroused by probability-agnostic $R$-contamination ambiguity set in the aforementioned robust RL, DRRL is proposed by constructing the ambiguity set with probability-aware distance \citep{zhou2021finitesample,yang2021theoretical,shi2022distributionally, panaganti2022sample, panaganti2022robust, ma2022distributionally}, including KL and $\chi^2$ divergence.  
As far as we know, most of the existing DRRL algorithms fall into the model-based fashion, which first estimate the whole transition model and then construct the ambiguity set around the model. 
The DR value and the corresponding policy are then computed based upon them.
Their main focus is to understand the sample complexity of the DRRL problem in the offline RL regime, leaving the more prevalent single-trajectory setting largely unexplored.
\section{Preliminary}

\subsection{Discounted MDPs}
\label{subsec:discount_form}
Consider an infinite-horizon MDP $(\gS, \gA, \gamma, \mu, P, r)$ where $\gS$ and $\gA$ are finite state and action spaces with cardinality $S$ and $A$.
$P:\gS\times \gA \rightarrow \Delta_{S}$ is the state transition probability measure.
Here $\Delta_{S}$ is the set of probability measures over $\gS$.
 $r$ is the reward function and $\gamma$ is the discount factor.
We assume that $r:\gS\times \gA \rightarrow [0,1]$ is deterministic and bounded in $[0, 1]$. 
A stationary  policy $\pi:\gS \rightarrow \Delta_{A}$ maps, for each state $s$ to a probability distribution over the action set $\gA$ and induce a random trajectory $s_1,a_1,r_1,s_2,\cdots$, with $s_1\sim \mu$, $a_n = \pi(s_n)$ and $s_{n+1}\sim P(\cdot \lvert s_n, a_n) \coloneqq P_{s_n, a_n} $ for $n\in \sN^+$. 
To derive the policy corresponding to the value function, we define the optimal state-action function $Q^{\star}:\gS\times \gA \rightarrow \sR$ as the expected cumulative discounted rewards under the optimal policy,
$
 Q^{\star}(s,a) \coloneqq \sup_{\pi\in \Pi}\sE_{\pi, P} [\sum_{n=1}^{\infty} \gamma^{n-1} r(s_n,a_n)\lvert s_1 = s, a_1 = a ].
$
The optimal state-action function $Q^*$ is also the fixed point of the Bellman optimality equation,
\begin{align}
 \label{eq:non-robust-opt}
 Q^{\star}(s,a) = r(s,a) + \gamma \sE_{s'\sim P} [\max_{a'\in \gA} Q^{\star}(s',a') ].
\end{align}

\subsection{$Q$-learning}
\label{subsec:q-learning}
Our model-free algorithmic design relies on the $Q$-learning template, originally designed to solve the non-robust Bellman optimality equation (Equation~\ref{eq:non-robust-opt}). $Q$-learning is a model-free reinforcement learning algorithm that uses a single sample trajectory to update the estimator for the $Q$ function incrementally. Suppose at time $n$, we draw a sample $(s_n, a_n, r_n, s'_n)$ from the environment. 
Then, the algorithm updates the estimated $Q$-function following:
\begin{align*}
Q_{n+1}(s_n,a_n) &= (1-\alpha_n)Q_n(s_n, a_n) + \\
& \quad \alpha_n (r_n + \gamma \max_{a^{\prime}\in \gA}Q_n(s^{\prime}_{n}, a^{\prime})),
\end{align*}
Here, $\alpha_n>0$ is a learning rate. 
The algorithm updates the estimated $Q$ function by constructing a unbiased estimator for the true $Q$ value, i.e., $r_n + \gamma \max_{a^{\prime}\in \gA} Q_n(s^{\prime}_n, a^{\prime})$ using one sample.
\subsection{Distributionally Robust MDPs}
\label{subsec:drdMDPs}
DRRL learns an optimal policy that is robust to unknown environmental changes, where the transition model $P$ and reward function $r$ may differ in the test environment. 
To focus on the perturbation of the transition model, we assume no pertubation to the reward function.
Our approach adopts the notion of distributional robustness, where the true transition model $P$ is unknown but lies within an ambiguity set $\gP$ that contains all transition models that are close to the training environment under some probability distance $D$.
To ensure computational feasibility, we construct the ambiguity set $\gP$ in the $(s,a)$-rectangular manner, where for each $(s,a)\in \gS\times \gA$, we define the ambiguity set $\gP_{s,a}$ as,
\begin{align}
 \label{eq:sa_ambiguity_set}
 \gP_{s,a} \coloneqq \{P'_{s,a}:\Delta_{S}\lvert D(P'_{s,a}\lVert P_{s,a})\le \rho\}.
\end{align}
We then build the ambiguity set for the whole transition model as the Cartesian product of every $(s,a)$-ambiguity set, i.e., $\gP = \prod_{(s,a)\in \gS\times \gA} \gP_{s,a}$.
Given $\gP$, we define the optimal DR state-action function $Q^{\star}$ as the value function of the best policy to maximize the worst-case return over the ambiguity set,
\begin{equation*}
    \begin{aligned}
 & Q^{\operatorname{rob}, \star}(s,a) \coloneqq \\
 & \sup_{\pi\in \Pi}\inf_{P\in \gP} \sE_{\pi, P}[\sum_{n=1}^{\infty} \gamma^{n-1} r(s_n,a_n)\lvert s_1 = s, a_1 = a].
    \end{aligned}
\end{equation*}
Under the $(s,a)$-rectangular assumption, the Bellman optimality equation has been established by \citet{iyengar2005robust, xu2010distributionally},
\begin{small}
\begin{align}    
Q^{\operatorname{rob},\star}(s,a) =& \gT_k(Q^{\operatorname{rob},\star})(s,a)\notag \\
\coloneqq  & r(s,a) + \gamma \inf_{P\in \gP} \sE_{s'\sim P} [\max_{a'\in \gA}Q^{\operatorname{rob}, \star}(s',a')].
\label{eq:optimality}
\end{align}
\end{small}

For notation simplicity, we would ignore the superscript $\operatorname{rob}$.
\section{Distributonally Robust $Q$-learning with Single Trajectory}
\label{sec:discount}
This section presents a general model-free framework for DRRL. 
We begin by instantiating the distance $D$ as Cressie-Read family of $f$-divergence \citep{cressie1984multinomial}, which is designed to recover previous common choices such as the $\chi^2$ and KL divergence. 
We then discuss the challenges and previous solutions in solving the corresponding DRRL problem, as described in Section~\ref{subsec:bias}. Finally, we present the design idea of our three-timescale framework and establish the corresponding convergence guarantee.
\subsection{Divergence Families}
\label{subsec:divergence}
Previous work on DRRL has mainly focused on one or several divergences, such as KL, $\chi^2$, and total variation (TV) divergences. 
In contrast, we provide a unified framework that applies to a family of divergences known as the Cressie-Read family of $f$-divergences.
This family is parameterized by $k\in(-\infty, \infty)\slash \{0,1\}$, and for any chosen $k$, the Cressie-Read family of $f$-divergences is defined as
\begin{align*}
    D_{f_k}(Q\lVert P) =  \int f_k(\frac{dP}{dQ}) dQ, 
\end{align*}
with $ f_k(t)\coloneqq\frac{t^k-k t+k-1}{k(k-1)}$.
Based on this family, we instantiate our ambiguity set in Equation~\ref{eq:sa_ambiguity_set} as $\mathcal{P}_{s,a} = \{P^{\prime}_{s,a}:\Delta_{S}\lvert D_{f_k}(P^{\prime}_{s,a}\lVert P_{s,a})\le \rho \}$ for some radius $\rho>0$. 
The Cressie-Read family of $f$-divergence includes $\chi^2$-divergence ($k=2$) and KL divergence ($k\rightarrow 1$).

One key challenge in developing DRRL algorithms using the formulation in Equation~\ref{eq:optimality} is that the expectation is taken over the ambiguity set $\gP$, which is computationally intensive even with the access to the center model $P$. 
Since we only have access to samples generated from the possibly misspecific model $P$, estimating the expectation with respect to other models $P'\in \gP$ is even more challenging. While importance sampling-based techniques can achieve this, the cost of high variance is still undesirable.
To solve this issue, we rely on the dual reformulation of Equation~\ref{eq:optimality}:
\begin{lemma}[\cite{duchi2021learning}]
    \label{lemma:cressie_dual}
    For any random variable $X\sim P$, define $\sigma_k(X, \eta) = -c_k(\rho) \sE_P[(\eta - X)_+^{k_*}]^{\frac{1}{k_*}} + \eta$ with $k_* = \frac{k}{k-1}$ and $c_k(\rho) = (1+k(k-1)\rho)^{\frac{1}{k}}$.
    Then
    \begin{align}
        \label{eq:dual_cressie}
        \inf_{Q \ll P}\{\sE_{Q}[X]: D_{f_k}(Q\lVert P)\le \rho\}&= \sup_{\eta\in \sR} \sigma_k(X, \eta),
    \end{align}
\end{lemma}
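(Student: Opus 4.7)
The plan is to establish strong Lagrangian duality for the constrained minimization on the left-hand side and then eliminate the divergence multiplier in closed form to recover the right-hand side.

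First, I would reparametrize the primal by the likelihood ratio $L = dQ/dP \geq 0$, recasting the left-hand side as
\begin{equation*}
\inf_L \bigl\{ \sE_P[LX] : L\ge 0,\ \sE_P[L]=1,\ \sE_P[f_k(L)]\le \rho \bigr\}.
\end{equation*}
The objective is linear in $L$, the constraint set is convex, and $L\equiv 1$ is strictly feasible (since $D_{f_k}(P\|P)=0<\rho$), so Slater's condition holds and strong duality will apply. I would then form the Lagrangian with a scalar $\eta\in\sR$ for the normalization constraint and $\lambda\ge 0$ for the divergence constraint:
\begin{equation*}
\gL(L,\eta,\lambda) = \sE_P[L(X-\eta)] + \lambda\bigl(\sE_P[f_k(L)]-\rho\bigr) + \eta.
\end{equation*}

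Second, I would carry out the inner minimization over $L\ge 0$ pointwise. Using $f_k'(t) = (t^{k-1}-1)/(k-1)$, the first-order condition yields the optimizer
\begin{equation*}
L^\star(\omega) = \Bigl(1 + \tfrac{k-1}{\lambda}(\eta - X(\omega))\Bigr)_+^{1/(k-1)}.
\end{equation*}
Substituting $L^\star$ back and using the algebraic identity $t f_k'(t) - f_k(t) = \tfrac{1 - t^k}{k(k-1)} + \tfrac{1}{k-1}$ (valid on $t\ge 0$), the dual function collapses, after collecting the constants absorbed into $\eta$, to an expression of the form
\begin{equation*}
g(\eta,\lambda) = \eta - \lambda\rho - C_k \lambda^{1-k_*} \sE_P\bigl[(\eta - X)_+^{k_*}\bigr]
\end{equation*}
for an explicit constant $C_k$ depending only on $k$.

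Third, I would perform the outer maximization over $\lambda\ge 0$ analytically. The first-order condition in $\lambda$ gives $\lambda$ proportional to $\sE_P[(\eta-X)_+^{k_*}]^{1/k_*}$, and plugging back combines the exponents in exactly the way needed so that the resulting coefficient in front of $\sE_P[(\eta-X)_+^{k_*}]^{1/k_*}$ equals $c_k(\rho) = (1+k(k-1)\rho)^{1/k}$. Taking the supremum over $\eta\in\sR$ then delivers $\sup_\eta \sigma_k(X,\eta)$, matching the claimed right-hand side.

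The main obstacle is the bookkeeping in step three: the conjugate exponents $k$, $k_*$, $1/k$, $1/k_*$ must combine precisely to produce the constant $c_k(\rho)=(1+k(k-1)\rho)^{1/k}$, and a sign/orientation error anywhere in the Lagrangian or in the use of the $(\cdot)_+$ threshold propagates through the whole calculation. A secondary technicality is handling the boundary cases where $\lambda=0$ is optimal (which corresponds to $\eta\le \mathrm{ess\,inf}\,X$) and verifying that the supremum over $\eta$ is attained at a finite value whenever $X$ is bounded; both are standard and can be dispatched by a truncation argument as in \citet{duchi2021learning}.
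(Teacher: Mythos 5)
The paper does not prove this lemma itself---it is imported from \citet{duchi2021learning}---and your Lagrangian-duality sketch (reparametrize by the likelihood ratio, invoke Slater, minimize the Lagrangian pointwise via $f_k'$, then eliminate the divergence multiplier $\lambda$ in closed form to produce $c_k(\rho)$) is exactly the argument given in that reference; the structure checks out, since both $\lambda$-dependent terms after the inner minimization scale as $\lambda^{1-k_*}\sE_P[(\eta-X)_+^{k_*}]$, so the stationary $\lambda\propto\sE_P[(\eta-X)_+^{k_*}]^{1/k_*}$ recombines the exponents into the $1/k_*$ root and the constant $c_k(\rho)$. One point worth making explicit: your reparametrization uses $\sE_P[f_k(dQ/dP)]$, i.e.\ the convention $D_{f_k}(Q\lVert P)=\int f_k(dQ/dP)\,dP$ under which the stated dual formula holds, whereas the paper's displayed definition $\int f_k(dP/dQ)\,dQ$ has the ratio inverted---you are silently correcting what appears to be a typo---and the derivation requires $k>1$ (so that $f_k$ is superlinear and the pointwise infimum over $L\ge 0$ is finite and attained), which covers the $\chi^2$ and KL-limit instances the paper actually uses but not the full parameter range $k\in(-\infty,\infty)\setminus\{0,1\}$ quoted in Section 3.1.
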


\vspace{-0.3cm}

Here $(x)_+ = \max\{x,0\}$. Equation~\ref{eq:dual_cressie} shows that protecting against the distribution shift is equivalent to optimizing the tail-performance of a model, as only the value below the dual variable $\eta$ are taken into account.
Another key insight from the reformulation is that as the growth of $f_k(t)$ for large $t$ becomes steeper for larger $k$, the $f$-divergence ball shrinks and the risk measure becomes less conservative. 
This bridges the gap between difference divergences, whereas previous literature, including \citet{yang2021theoretical} and \citet{zhou2021finitesample}, treats different divergences as separate.
By applying the dual reformulation, we can rewrite the Cressie-Read Bellman operator in Equation~\ref{eq:optimality} as
\begin{align}
    \label{eq:bellman_cressie}
    \gT_k(Q)(s,a) = r(s,a) + \gamma \sup_{\eta\in \sR}\sigma_k(\max_{a'\in \gA}Q(\cdot, a'), \eta).
\end{align}

\vspace{-0.4cm}

\subsection{Bias in Plug-in Estimator in Single Trajectory Setting}
\label{subsec:bias}
In this subsection, we aim to solve Equation~\ref{eq:bellman_cressie} using single-trajectory data, which has not been addressed by previous DRRL literature. 
As we can only observe one newly arrival sample each time, to design a online model-free DRRL algorithm, we need to approximate the expectation in Equation~\ref{eq:bellman_cressie} using that single sample properly.
As mentioned in Section~\ref{subsec:q-learning}, the design of the $Q$-learning algorithm \textbf{relies on an one-sample unbiased estimator of the true Bellman operator}. 
However, this convenience vanishes in the DR Bellman operator. 
To illustrate this, consider plugging only one sample into the Cressie-Read Bellman operator Equation~\ref{eq:bellman_cressie}:
\begin{align*}
 & r(s,a) + \gamma\sup_{\eta\in \sR}\{\eta -c_k(\rho)(\eta - \max_{a'} Q(s',a'))_+ \}\\
 &\quad = r(s,a) + \gamma \max_{a'} Q(s',a').
\end{align*}
This reduces to the non-robust Bellman operator and is obviously not an unbiased estimator for $\gT_k(Q)$. This example reveals the inherently more challenging nature of the online DRRL problem. Whereas non-robust RL only needs to improve the expectation of the cumulative return, improving the worst-case return requires more information about the system dynamics, which seems hopeless to be obtained from only one sample and  sharply contrasts with our target.

Even with the help of batch samples, deriving an appropriate estimator for the DR Bellman operator is still nontrivial. 
Consider a standard approach to construct estimators, sample average approximation (SAA): 
given a batch of sample size $n$ starting from a fix state-action pair $(s,a)$, i.e., $D_n = \{(s_i,a_i,s^{\prime}_i, r_i), i\in[n], (s_i, a_i)=(s,a)\}$, the SAA empirical Bellman operator is defined as:
\begin{align*}
    \widehat{\gT}_{k}(Q)(s,a, D_n) = r(s,a) + \gamma \sup_{\eta\in \sR}\widehat{\sigma}_k(\max_{a'\in \gA}Q(\cdot, a'), \eta, D_n).
\end{align*}
Here, $\widehat{\sigma}_k$ is the empirical Cressie-Read functional defined as 
\begin{align*}
    \widehat{\sigma}_k \coloneqq -c_k(\rho) [\sum_{i\in[n]}(\eta - \max_{a'\in \gA}Q(s_i', a'))_+^{k_*}/n]^{\frac{1}{k_*}} + \eta.
\end{align*}
As pointed out by \citet{liu2022distributionally}, the SAA estimator is biased, prompting the introduction of the multilevel Monte-Carlo method \citep{blanchet2015unbiased}. Specifically, it first obtains $N\in \sN^{+}$ samples from the distribution $\sP(N=n)= p_n = \epsilon(1-\epsilon)^n$, and then uses the simulator to draw $2^{N+1}$ samples $D_{2^{N+1}}$. The samples are further decomposed into two parts: $D_{:2^{N}}$ consists of the first $2^N$ samples, while $D_{2^{N}+1:}$ contains the remaining samples. Finally, the DR term in Equation~\ref{eq:bellman_cressie} is approximated by solving three optimization problems:
\begin{align*}
    \widehat{\gT}_k(Q)(s,a, D_n) &= r_1 + \max_{a'\in \gA} Q(s_1^{\prime}, a') + \frac{\Delta_{N, \delta}^q(Q)}{p_N},
\end{align*}
\begin{align*}
        \Delta_{N, \delta}^q(Q)& \coloneqq \sup_{\eta\ge 0} \widehat{\sigma}_k(\max_{a'\in \gA} Q(\cdot, a'), \eta, D_{2^{N+1}}) \\
        & \quad - \frac{1}{2}\sup_{\eta\ge 0} \widehat{\sigma}_k(\max_{a'\in \gA} Q(\cdot, a'), \eta, D_{:2^{N}})\\
        & \quad - \frac{1}{2}\sup_{\eta\ge 0} \widehat{\sigma}_k(\max_{a'\in \gA} Q(\cdot, a'), \eta, D_{2^{N}+1:}).
\end{align*}
However, this multilevel Monte-Carlo solution requires a large batch of samples for the same state-action pair before the next update, resulting in unbounded memory costs/computational time that are not practical. 
Furthermore, it is prohibited in the single-trajectory setting, where each step only one sample can be observed. 
Our experimental results show that simply approximating the Bellman operator with simulation data, without exploiting its structure, suffers from low data efficiency.
\subsection{Three-timescale Framework}
The $Q$-learning is solving the nonrobust Bellman operator's fixed point in a stochastic approximation manner.
A salient feature in the DR Bellman operator, compared with its nonrobust counterpart, is a bi-level optimization nature, i.e., jointly solving the dual parameter $\eta$ and the fixed point $Q$ of the Bellman optimality equation.
We revisit the stochastic approximation view of the $Q$-learning and develop a three-timescale framework, by a faster running estimate of the optimal dual parameter, and a slower update of the $Q$ table.

To solve Equation~\ref{eq:bellman_cressie} using a stochastic approximation template, we iteratively update the variables $\eta$ and $Q$ table as follows: for the $n$-th iteration after observing a new transition sample $(s_n, a_n, s_n', r_n)$ and some learning rates $\zeta_1, \zeta_2>0$,   
\begin{align*}
    \eta_{n+1} &= \eta_n - \zeta_1 * \text{Gradient of } \eta_n, \\ 
    Q_{n+1} &=  r_n + \zeta_2\gamma \sigma_k(\max_{a'\in \gA}Q_n(\cdot, a'), \eta_{n}).
\end{align*}
As the update of $\eta$ and $Q$ relies on each other, we keep the learning speeds of $\eta$ and $Q$, i.e., $\zeta_1$ and $\zeta_2$, different to stabilize the training process.
Additionally, due to the $(s,a)$-rectangular assumption, $\eta$ is independent across different $(s,a)$-pairs, while the $Q$ table depends on each other. 
The independent structure for $\eta$ allows it to be estimated more easily; so we approximate it in a faster loop, while for $Q$ we update it in a slower loop.

\subsection{Algorithmic Design}
In this subsection, we further instantiate the three-timescale framework to the Cressie-Read family of $f$-divergences.
First, we compute the gradient of $\sigma_{k}(\max_{a'\in \gA}Q(\cdot, a'), \eta)$ in Equation~\ref{eq:bellman_cressie} with respect to $\eta$.

\begin{algorithm}[htbp]
    \caption{Distributionally Robust $Q$-learning with Cressie-Read family of $f$-divergences}
    \label{alg:discount_dro_Q_chi}
    \begin{algorithmic}[1]
    \STATE {\bfseries Input:} Exploration rate $\epsilon$, Learning rates $\{\zeta_i(n)\}_{i\in[3]}$, Cressie-Read family parameter $k$, Ambiguity set radius $\rho$.
    \STATE {\bfseries Init:} Initialize $Q$, $Z$ and $\eta$ with zero.
    \FOR {$n = 1,2,\cdots$}
    \STATE Observe the state $s_n$, execute the action $a_n = \operatorname{\arg\max}_{a\in \mathcal{A}} Q(s_n, a)$ using $\epsilon$-greedy policy
    \STATE Observe the reward $r_n$ and next state $s_{n}^{\prime}$
    \STATE Update 
    \begin{align*}        
    Z_{1}(s_{n}, a_{n}) &\leftarrow  (1- \zeta_1(n))Z_{1}(s_{n}, a_{n})\\
    & + \zeta_1(n)(\eta(s_n, a_n)-\max_a Q(s_n^{\prime},a))_+^{k_*},\\
    Z_{2}(s_{n}, a_{n}) &\leftarrow (1- \zeta_1(n))Z_{2}(s_{n}, a_{n})\\
    & + \zeta_1(n) (\eta(s_n, a_n)-\max_a Q(s_n^{\prime},a))_+^{k_*-1}.
    \end{align*}
    \STATE Update \\
    {\small
    $
    \eta(s_n, a_n) \leftarrow \eta(s_n, a_n)
    + \zeta_2(n)(-c_k(\rho) Z_{1}^{\frac{1}{k_*}-1}(s_n, a_n)\cdot Z_{2}(s_n, a_n) + 1).
    $
    }
    \STATE 
    Update \\
    $
    Q(s_{n}, a_{n}) \leftarrow (1-\zeta_3(n)) Q(s_{n}, a_{n})
    + \zeta_3(n) (r_n - \gamma(c_k(\rho)Z_{1}^{\frac{1}{k_*}}(s_n, a_n) - \eta(s_n,a_n))).
    $
    \ENDFOR
    \end{algorithmic}
\end{algorithm}

\begin{lemma}[Sub-Gradient of the $\sigma_k$ dual function]

$$\partial \sigma_k (\max_{a' \in \mathcal{A}} Q(\cdot, a'), \eta) \in $$
\begin{equation}
 \label{eq:gradient_Cressie_Read} 
    \left \{ 
    \begin{aligned}
        &\{-c_k(\rho) Z_1^{\frac{1}{k^*}-1} \cdot Z_2+1\},\quad \eta > \max_{a{'} \in A} Q(\cdot, a'),\\
        &[-c_k(\rho) Z_1^{\frac{1}{k^*}-1} \cdot Z_2+1, 0],\quad \eta = \max_{a' \in \mathcal{A}} Q(\cdot, a'),\\
        &\{1\}, \quad \eta < \max_{a' \in \mathcal{A}} Q(\cdot, a'),
      \end{aligned}
      \right.
\end{equation}

where 
\begin{align}
 Z_1& =\mathbb{E}_{P}[(\eta-\max_{a'\in \gA}Q(\cdot, a'))_{+}^{k_*}],\\ 
 Z_2&=\mathbb{E}_P[(\eta-\max_{a'\in \gA}Q(\cdot, a'))_{+}^{k_*-1}].
\end{align}
\end{lemma}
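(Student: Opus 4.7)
The plan is to write $\sigma_k(X,\eta) = \eta - c_k(\rho)\, g(\eta)^{1/k_*}$ with $g(\eta) := \mathbb{E}_P[(\eta - X)_+^{k_*}]$ and $X := \max_{a'\in\gA} Q(\cdot, a')$; compute the classical derivative wherever $g>0$, observe that $g$ vanishes identically whenever $\eta$ lies below the support of $X$, and use concavity of $\sigma_k(X,\cdot)$ to identify the subdifferential at the one transition point.

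First I would verify that $\sigma_k(X,\cdot)$ is concave. Pointwise, $\eta \mapsto (\eta-X)_+$ is convex and nonnegative, and the $L^{k_*}(P)$ norm $Y\mapsto \|Y\|_{L^{k_*}}$ is convex and nondecreasing on the cone of nonnegative random variables, so $\eta \mapsto g(\eta)^{1/k_*}$ is convex; hence $\sigma_k(X,\cdot)$, the identity minus a positive multiple of this convex function, is concave, and its subdifferential is everywhere a nonempty closed interval $[\sigma'_+,\sigma'_-]$ determined by the one-sided derivatives. Because $k_*>1$, the integrand $\eta\mapsto(\eta-x)_+^{k_*}$ is $C^1$ with continuous derivative $k_*(\eta-x)_+^{k_*-1}$; boundedness of $Q$ on the finite state-action space together with dominated convergence then yields $g\in C^1(\mathbb{R})$ with $g'(\eta) = k_* Z_2(\eta)$.

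On the open set where $g(\eta)>0$ (the regime of the first branch, $\eta$ above the support of $X$), the composition $g^{1/k_*}$ is smooth and the chain rule gives
\begin{equation*}
\partial_\eta \sigma_k(X,\eta) = 1 - c_k(\rho)\cdot \tfrac{1}{k_*}\,g(\eta)^{1/k_*-1}\cdot k_* Z_2 \;=\; 1 - c_k(\rho)\, Z_1^{1/k_*-1} Z_2,
\end{equation*}
matching the top branch of \eqref{eq:gradient_Cressie_Read}. In the complementary region where $(\eta-X)_+\equiv 0$ almost surely in a neighborhood of $\eta$, one gets $Z_1=Z_2=0$, $\sigma_k(X,\eta)=\eta$, and the subdifferential reduces to $\{1\}$, recovering the bottom branch. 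The two regions meet at a single critical value; there, concavity forces $\partial_\eta \sigma_k$ to equal the closed interval spanned by the one-sided derivatives obtained as limits of the two formulas above, producing the middle branch.

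The main obstacle will be the kink case. The factor $Z_1^{1/k_*-1}$ diverges as $Z_1\downarrow 0$ (since $1/k_*-1<0$), but it is exactly compensated by $Z_2 \to 0$, so the product must be handled as a one-sided limit rather than plugging in zero; concavity of $\sigma_k(X,\cdot)$ is what ultimately guarantees this limit exists and coincides with the rightmost element of the subdifferential interval. A secondary technical point is uniform integrability of $(\eta - X)_+^{k_*-1}$ when $k_*\in(1,2)$ (the Hölder regime), which is immediate from the boundedness of $X$ in the finite state-action setting.
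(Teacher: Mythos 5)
The paper states this lemma without proof, so your argument can only be judged on its own terms; the route you take (concavity of $\sigma_k(X,\cdot)$, a dominated-convergence computation of $g'(\eta)=k_*Z_2$, the chain rule on $\{g>0\}$, and one-sided derivatives at the single kink) is the natural one and correctly delivers the first and third branches for $k_*>1$. One point you should make explicit: the comparison ``$\eta>\max_{a'}Q(\cdot,a')$'' is against a random variable, and your trichotomy is exhaustive only if the threshold is read as $m:=\min_{s'\in\mathrm{supp}(P_{s,a})}\max_{a'}Q(s',a')$, since $Z_1>0$ exactly when $\eta>m$. You use this reading implicitly but never state it.

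More seriously, your kink analysis does not actually ``produce the middle branch'' as claimed. For a concave function the superdifferential at $m$ is the interval $[\phi'_+(m),\phi'_-(m)]$; from the region $\eta<m$, where $\sigma_k(X,\eta)=\eta$, you get $\phi'_-(m)=1$, and from the right you get $\phi'_+(m)=1-c_k(\rho)\,p^{1/k_*}$ with $p=P(X=m)$ (your cancellation $Z_1^{1/k_*-1}Z_2\to p^{1/k_*}$ is correct because the state space is finite, so $p>0$). Your argument therefore yields the interval $[\,1-c_k(\rho)p^{1/k_*},\,1\,]$, whereas the lemma asserts an interval with right endpoint $0$; no one-sided derivative of this concave function equals $0$ in general, so either that endpoint is an error in the statement that you must flag, or it requires a separate argument you have not given. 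Asserting that concavity ``produces the middle branch'' conceals this mismatch. Relatedly, the limit of the derivative from the right is the \emph{left} (smallest) element $\phi'_+(m)$ of the superdifferential, not the ``rightmost element'' as you write.
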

Due to the nonlinearity in Equation~\ref{eq:gradient_Cressie_Read}, the plug-in gradient estimator is in fact biased.
The bias arises as for a random variable $X$, $\sE[f(X)] \neq f(\sE[X])$ for $f(x) = x^{\frac{1}{k_*}-1}$ in $Z_1^{\frac{1}{k_*-1}}$. 
To address this issue, we introduce another even faster timescale to estimate $Z_1$ and $Z_2$,
\begin{align}
\label{eq:Z1_chi}
Z_{1}(s_{n}, a_{n}) &\leftarrow (1- \zeta_1(n))Z_{1}(s_{n}, a_{n})\notag \\
    & \quad + \zeta_1(n)(\eta(s_n, a_n)-\max_{a'} Q(s_n^{\prime},a'))_+^{k_*}, \\
\label{eq:Z2_chi}
Z_{2}(s_{n}, a_{n}) &\leftarrow (1- \zeta_1(n))Z_{2}(s_{n}, a_{n})\notag \\
& \quad  + \zeta_1(n) (\eta(s_n, a_n)-\max_{a'} Q(s_n^{\prime},a'))_+^{k_*-1}. 
\end{align}
In the medium timescale, we approximate $\eta^{\star}(s,a)\coloneqq \arg\max_{\eta\in \gR}\sigma_k(\max_{a'\in\gA}Q(s,a'), \eta)$ by incrementally update the dual variable $\eta$ using the stochastic gradient descent method, where the true gradient computed in Equation~\ref{eq:gradient_Cressie_Read} is approximated by: 
\begin{align}
    \label{eq:eta_chi}
    & \eta(s_n, a_n) \leftarrow \eta(s_n, a_n)\notag \\
    & \quad + \zeta_2(n)( -c_k(\rho) Z_{1}^{\frac{1}{k_*}-1}(s_n,a_n)\cdot Z_{2}(s_n,a_n) + 1).
\end{align}
Finally, we update the DR $Q$ function in the slowest timescale using Equation~\ref{eq:chi_Q_discount}, 
 \begin{align}
  Q(s_{n}, a_{n}) \leftarrow & (1-\zeta_3(n)) Q(s_{n}, a_{n})\notag \\
 & + \zeta_3(n) \widehat{\gT}_{n,k}(Q)(s_n,a_n), \label{eq:chi_Q_discount}
 \end{align}
 where $\widehat{\gT}_{n,k} (Q)(s,a)$ is the empirical version of Equation~\ref{eq:bellman_cressie} in the $n$-th iteration:
 \begin{align*}
 &\widehat{\gT}_{n, k} (Q)(s_n,a_n) = r_n - \gamma(c_k(\rho)Z_{1}^{\frac{1}{k_*}}(s_n,a_n) - \eta(s_n,a_n)).
 \end{align*}
 Here $\zeta_1(n), \zeta_2(n)$ and $\zeta_3(n)$ are learning rates for three timescales at time $n$, which will be specified later.
We summarize the ingredients into our DR $Q$-learning (DRQ) algorithm (Algorithm~\ref{alg:discount_dro_Q_chi}), and prove the almost surely (a.s.) convergence of the algorithm as Theorem~\ref{thm:convergence_discount_dro_chi}. 
The proof is deferred in Appendix~\ref{subsec:proof_discount_chi}.
 \begin{theorem}
 \label{thm:convergence_discount_dro_chi}
 The estimators at the n-th step  in Algorithm~\ref{alg:discount_dro_Q_chi}, $(Z_{n,1}, Z_{n,2}, \eta_n, Q_n)$, converge to $(Z^{\star}_{1}, Z^{\star}_{2}, \eta^{\star}, Q^{\star})$ a.s. as $n\rightarrow \infty$, where $\eta^{\star}$ and $Q^{\star}$ are the fixed-point of the equation $Q = \mathcal{T}_k(Q)$, and $Z^{\star}_1$ and $Z^{\star}_2$ are the corresponding quantity under $\eta^{\star}$ and $Q^{\star}$.
 \end{theorem}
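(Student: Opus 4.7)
The plan is to extend Borkar's two-timescale stochastic approximation framework \citep{borkar1997stochastic} to a three-timescale hierarchy by exploiting the step-size separation $\zeta_3(n)/\zeta_2(n)\to 0$ and $\zeta_2(n)/\zeta_1(n)\to 0$ together with the standard Robbins--Monro conditions $\sum_n \zeta_i(n) = \infty$, $\sum_n \zeta_i(n)^2 < \infty$ for $i=1,2,3$. At each step, only the component indexed by the visited pair $(s_n,a_n)$ is updated; the $\epsilon$-greedy behavior policy together with the finiteness of $\gS\times\gA$ guarantees that every pair is visited infinitely often and that the relative visit frequencies admit strictly positive liminf, so the asynchronous-update analysis of \citet{borkar1998asynchronous} applies, reducing the problem to analyzing the synchronous ODEs after a time rescaling.

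\paragraph{Timescale 1 (fastest: $Z_1,Z_2$).} Freeze $(\eta,Q)$ at quasi-static values. The updates in Equations~\ref{eq:Z1_chi}--\ref{eq:Z2_chi} are affine in $(Z_1,Z_2)$ with bounded i.i.d.\ noise (since $r,Q$ are bounded on any bounded set and $\eta$ will be shown bounded), so the associated ODE is linear and contractive with unique globally asymptotically stable equilibrium
\begin{align*}
Z_1^{\infty}(s,a;\eta,Q) &= \sE_{P_{s,a}}\bigl[(\eta(s,a)-\max_{a'}Q(\cdot,a'))_+^{k_*}\bigr],\\
Z_2^{\infty}(s,a;\eta,Q) &= \sE_{P_{s,a}}\bigl[(\eta(s,a)-\max_{a'}Q(\cdot,a'))_+^{k_*-1}\bigr].
\end{align*}
Standard tracking lemmas then give $\|(Z_{n,1},Z_{n,2})-(Z_1^{\infty},Z_2^{\infty})(\eta_n,Q_n)\|\to 0$ a.s.

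\paragraph{Timescale 2 (medium: $\eta$).} Substituting the tracked $Z_1^{\infty},Z_2^{\infty}$ into the $\eta$ update (Equation~\ref{eq:eta_chi}) realizes a stochastic subgradient step on $\eta\mapsto\sigma_k(\max_{a'}Q(\cdot,a'),\eta)$, which by Lemma~\ref{lemma:cressie_dual} is concave with supremum attained at a unique $\eta^{\star}(s,a;Q)$ (under the mild condition that $\max_{a'}Q(\cdot,a')$ is non-constant; the constant case is handled by projection or by noting the supremum is realized at that constant). The $(s,a)$-rectangularity decouples the $\eta$-updates across pairs, so the medium-timescale ODE is a decoupled concave subgradient flow whose unique equilibrium is $\eta^{\star}(\cdot;Q_n)$. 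Boundedness of $\eta_n$ follows from the fact that outside a bounded interval the subgradient is identically $+1$ for $\eta<\max_{a'}Q$ and strictly negative for large $\eta$ (since $c_k(\rho)>1$ and $Z_1^{1/k_*-1}Z_2\to 1$ as $\eta\to\infty$); hence iterates are confined to a deterministic compact set with probability one.

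\paragraph{Timescale 3 (slowest: $Q$).} With $(Z_{n,1},Z_{n,2},\eta_n)$ tracking their equilibria as functions of $Q_n$, the slowest-timescale update in Equation~\ref{eq:chi_Q_discount} reduces to the asynchronous stochastic iteration
\begin{align*}
Q_{n+1}(s_n,a_n) = (1-\zeta_3(n))Q_n(s_n,a_n) + \zeta_3(n)\bigl[\gT_k(Q_n)(s_n,a_n) + \varepsilon_n\bigr],
\end{align*}
where $\varepsilon_n\to 0$ a.s.\ by the two preceding tracking results. Since $\gT_k$ is a $\gamma$-contraction in $\|\cdot\|_\infty$ \citep{iyengar2005robust}, invoking the asynchronous SA convergence theorem for contractive operators with vanishing asymptotic bias \citep[Ch.\ 7]{borkar2009stochastic} yields $Q_n\to Q^{\star}$ a.s., after which the continuity of the maps $Q\mapsto\eta^{\star}(Q)$ and $(Q,\eta)\mapsto (Z_1^{\infty},Z_2^{\infty})$ propagates convergence to the remaining iterates.

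\paragraph{Main obstacle.} The principal technical difficulty is establishing a priori boundedness of the $\eta$-iterates and controlling the error propagation between timescales under the nonsmooth kink of the Cressie-Read functional at $\eta=\max_{a'}Q(\cdot,a')$. The kink forces the use of differential inclusions rather than ODEs at the medium scale, and the tracking error for $Z_1^{1/k_*-1}$ is sensitive near $Z_1\approx 0$ (which occurs precisely at the kink). I would address this by (i) projecting or truncating $\eta$ to a sufficiently large compact interval justified a posteriori by boundedness of $Q$, and (ii) combining a Filippov-style argument with a Lyapunov function $\eta\mapsto -\sigma_k(\max_{a'}Q(\cdot,a'),\eta)$ to show the differential inclusion has the claimed unique attractor. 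The remaining steps are routine verifications of the hypotheses of the three-timescale asynchronous SA theorem.
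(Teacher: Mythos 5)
Your proposal follows essentially the same route as the paper: a three-timescale extension of Borkar's stochastic-approximation framework with $(Z_1,Z_2)$ tracked on the fastest scale toward the conditional expectations, $\eta$ on the intermediate scale as stochastic (sub)gradient ascent on the concave dual objective with equilibrium $\eta^{\star}(Q)$, and $Q$ on the slowest scale, where convergence follows from the $\gamma$-contraction of $\gT_k$. The paper resolves the boundedness issue exactly as you suggest in your obstacle discussion, by clipping $\eta$ to $[0,\tfrac{c_k(\rho)}{c_k(\rho)-1}\cdot\tfrac{1}{1-\gamma}]$ (an interval shown a priori to contain the dual maximizer) and $Q$ to $[0,\tfrac{1}{1-\gamma}]$.
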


The proof establishes that, by appropriately selecting stepsizes to prioritize frequent updates of $Z_{n,1}$ and $Z_{n,2}$, followed by $\eta_n$, and with $Q_n$ updated at the slowest rate, the solution path of $(Z_{n,1}, Z_{n,2}, \eta_n, Q_n)$ closely tracks a system of three-dimensional ordinary differential equations (ODEs) considering martingale noise. 
Our approach is to generalize the classic machinery of two-timescale stochastic approximation~\citep{borkar2009stochastic} to a three-timescale framework, and use it to analyze our proposed algorithm.
See Appendix~\ref{sec:multi_timescale} for the detailed proof.
\section{Experiments}
We demonstrate the robustness and sample complexity of our DRQ algorithm in the Cliffwalking environment \citep{deletang2021model} and American put option environment (deferred in Appendix~\ref{sec:add_exp}). 
These environments provide a focused perspective on the policy and enable a clear understanding of the key parameters effects. 
We develop a deep learning version of DRQ and compare it with practical online and offline (robust) RL algorithms in classical control tasks, LunarLander and CartPole.
\subsection{Convergence and Sample Complexity}
Before we begin, let us outline the key findings and messages conveyed in this subsection:
\textbf{(1) Our ambiguity set design provides substantial robustness}, as demonstrated through comparisons with non-robust $Q$-learning and $R$-contamination ambiguity sets \citep{wang2021online}.
\textbf{(2) Our DRQ algorithm exhibits desirable sample complexity}, significantly outperforming the multi-level Monte Carlo based DRQ algorithm proposed by \citet{liu2022distributionally} and comparable to the sample complexity of the model-based DRRL algorithm by \citet{panaganti2022sample}.

\begin{figure}[htbp]
    \centering
    \subfloat[Environment]{
    \includegraphics[width=0.20\textwidth]{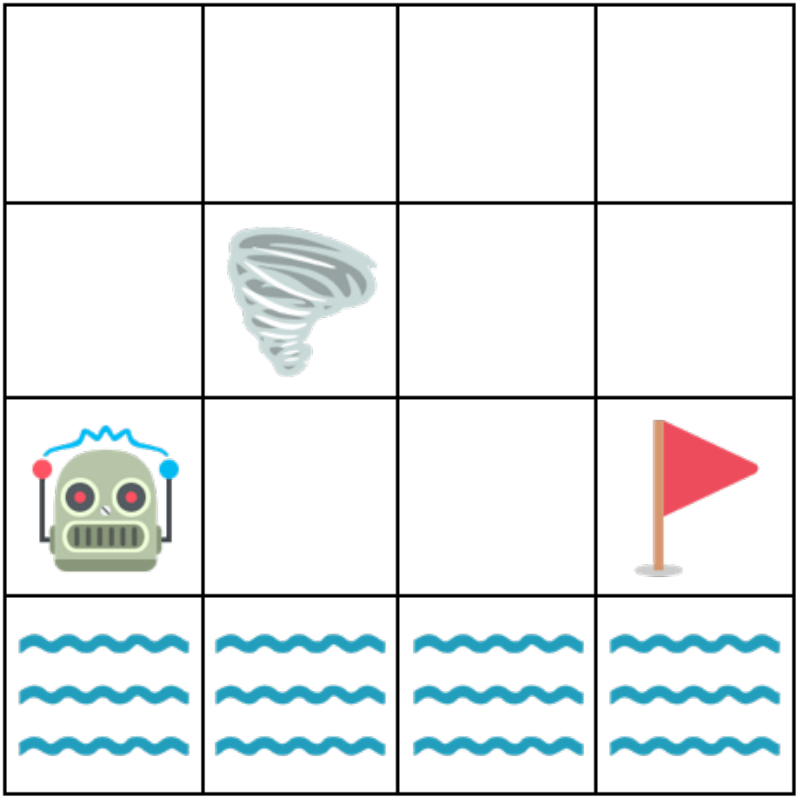}}
    \hfill 
    \subfloat[Nonrobust]{
    \includegraphics[width=0.20\textwidth]{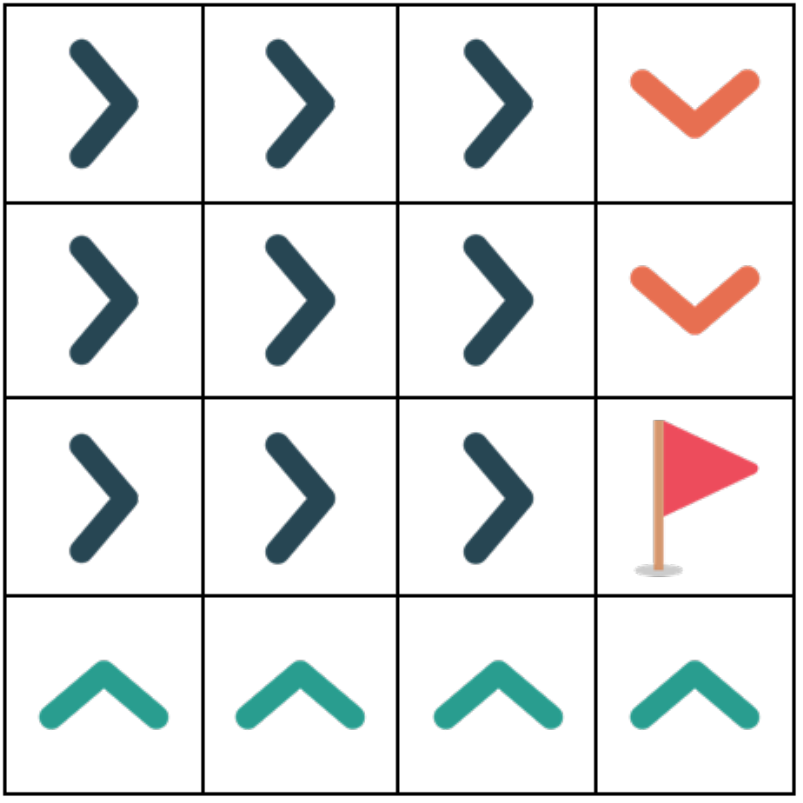}}
    \hfill 
    \subfloat[$\rho=1.0$]{
    \includegraphics[width=0.20\textwidth]{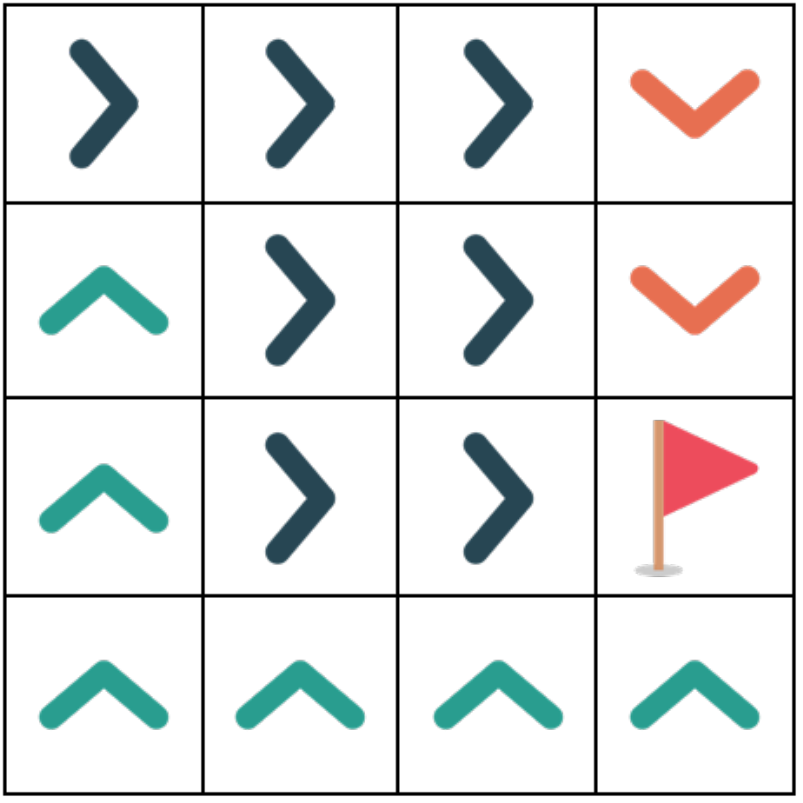}}
    \hfill 
    \subfloat[$\rho=1.5$]{
    \includegraphics[width=0.20\textwidth]{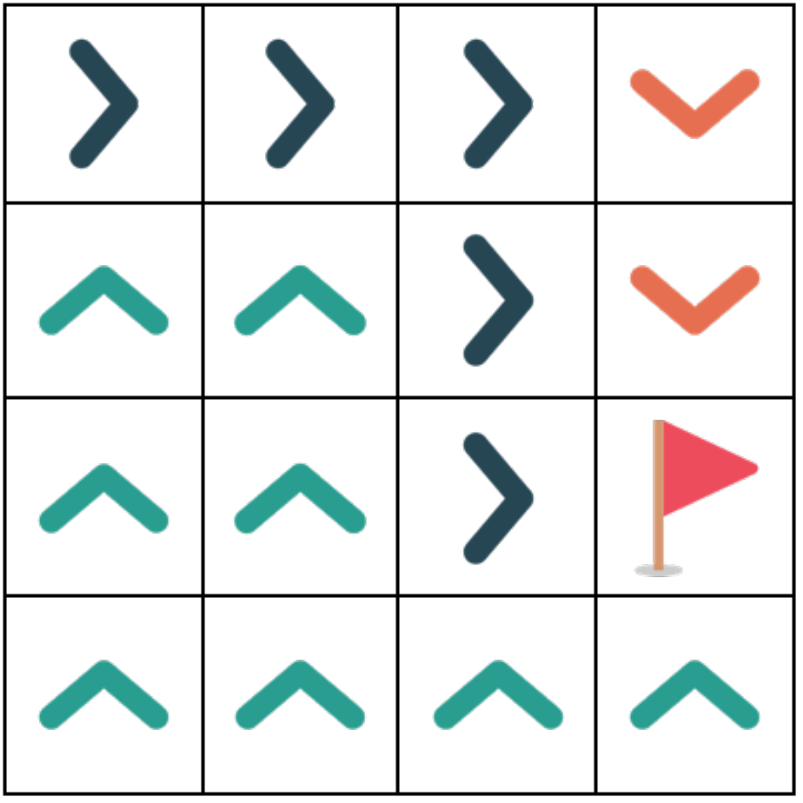}}
    \caption{The Cliffwalking environment and the learned policies for different $\rho$'s.}\label{fig:water_env}
\end{figure}

\textbf{Experiment Setup:} 
The Cliffwalking task is commonly used in risk-sensitive RL research \citep{deletang2021model}. Compared to the Frozen Lake environment used by \citet{panaganti2022sample}, Cliffwalking offers a more intuitive visualization of robust policies (see Figure~\ref{fig:water_env}). The task involves a robot navigating from an initial state of $(2,0)$ to a goal state of $(2,3)$. At each step, the robot is affected by wind, which causes it to move in a random direction with probability $p$. Reaching the goal state earns a reward of $+5$, while encountering a wave in the water region $\{(3,j) \mid 0 \leq j \leq 3 \}$ results in a penalty of $-1$. We train the agent in the nominal environment with $p=0.5$ for 3 million steps per run, using an $\epsilon$-greedy exploration strategy with $\epsilon=0.1$. We evaluate its performance in perturbed environments, varying the choices of $k$ and $\rho$ to demonstrate different levels of robustness. 
We set the stepsize parameters according to Assumption~\ref{as:stepsize}: $\zeta_1(t)=1 /(1 + (1 - \gamma)t^{0.6})$, $\zeta_2(t) = 1 /(1 + 0.1(1 - \gamma)t^{0.8})$, and $\zeta_3(t) = 1/(1 + 0.05(1 - \gamma) * t)$, where the discount factor is $\gamma=0.9$.
\begin{figure*}[htbp]
    \centering
    \subfloat[Return]{\label{fig:water_return}
    \includegraphics[height=3.1cm]{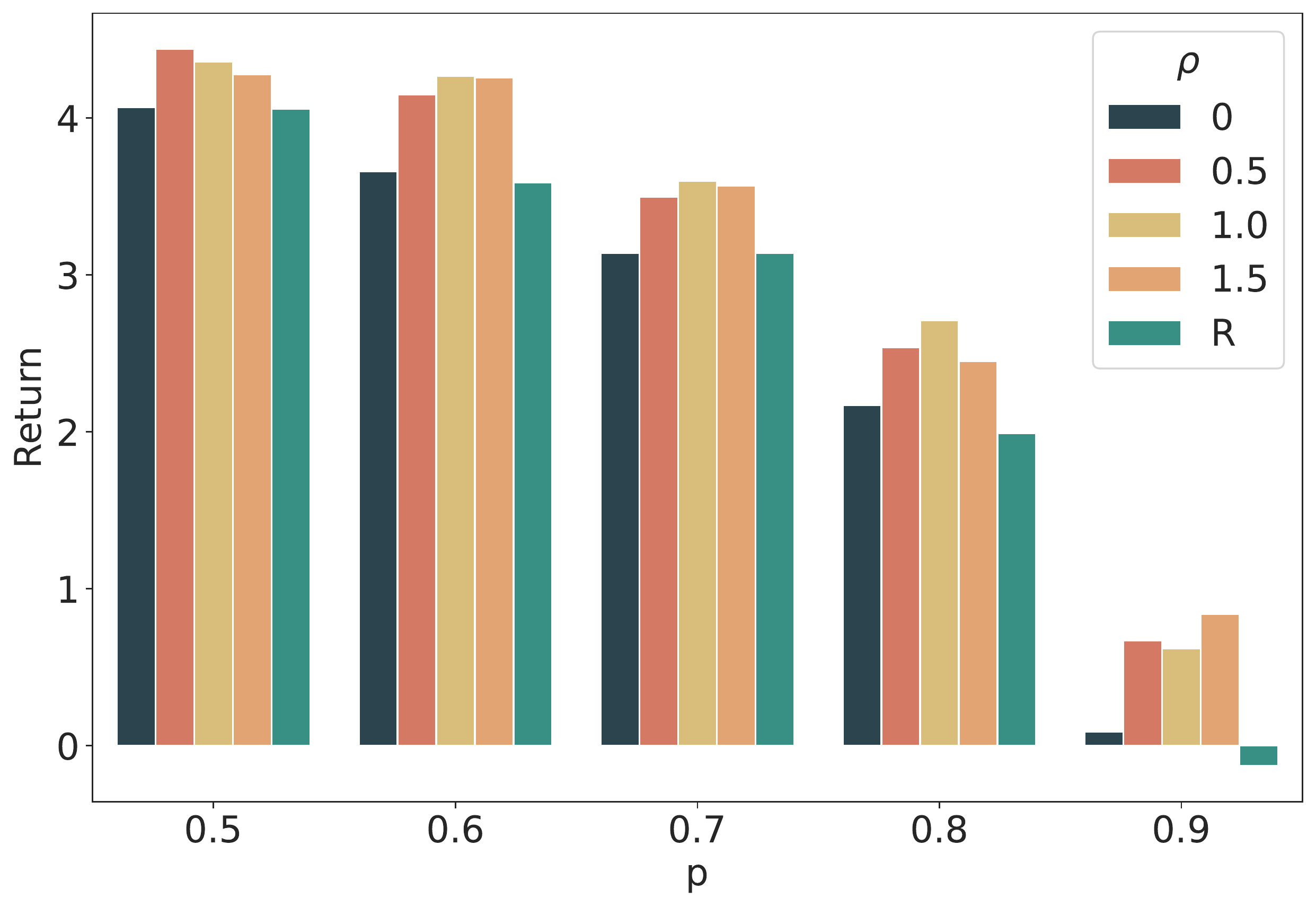}} 
    \subfloat[Episode length]{\label{fig:water_lengths}
    \includegraphics[height=3.1cm]{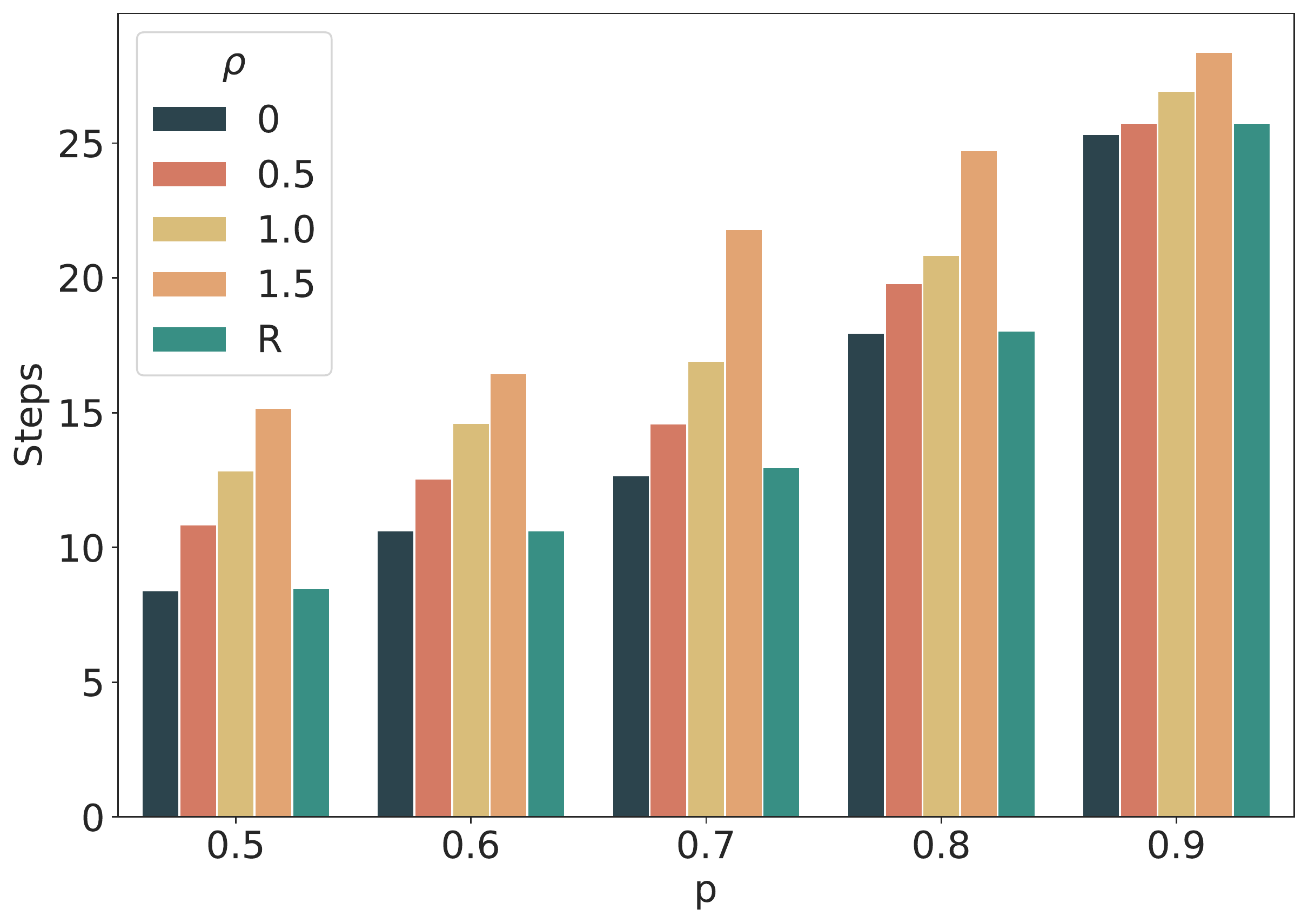}}
    \subfloat[Value of various $k$ and $\rho$]{
    \label{fig:kNrho}
    \includegraphics[height=3.1cm]{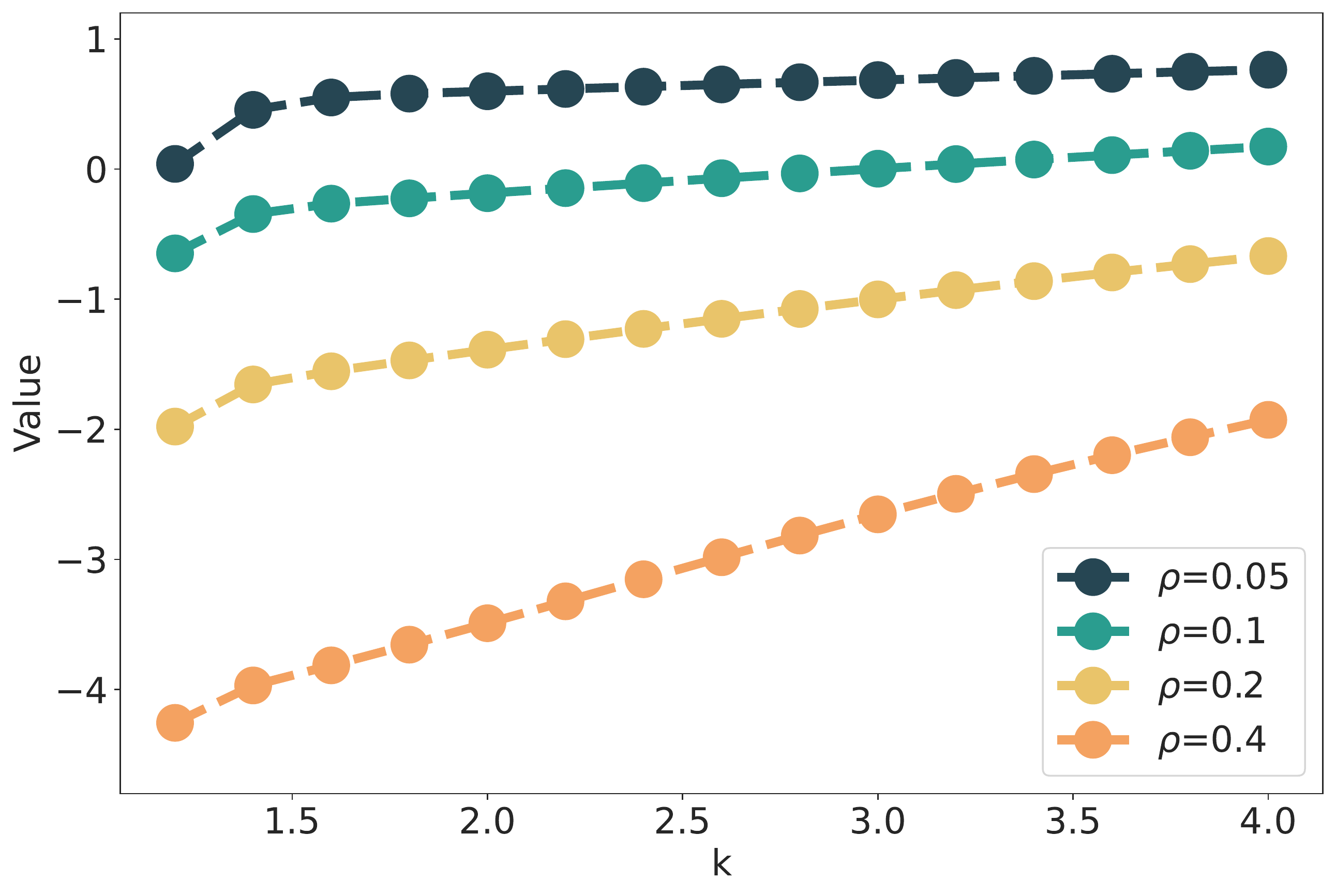}
    }
    \caption{Averaged return and steps with 100 random seeds in the perturbed environments. $\rho=0$ corresponds to the non-robust $Q$-learning. $R$ denotes the $R$-contamination ambiguity set.}
    \label{fig:water_result}
\end{figure*}

\begin{figure*}[htbp]
    \centering
    \includegraphics[width=.85\textwidth]{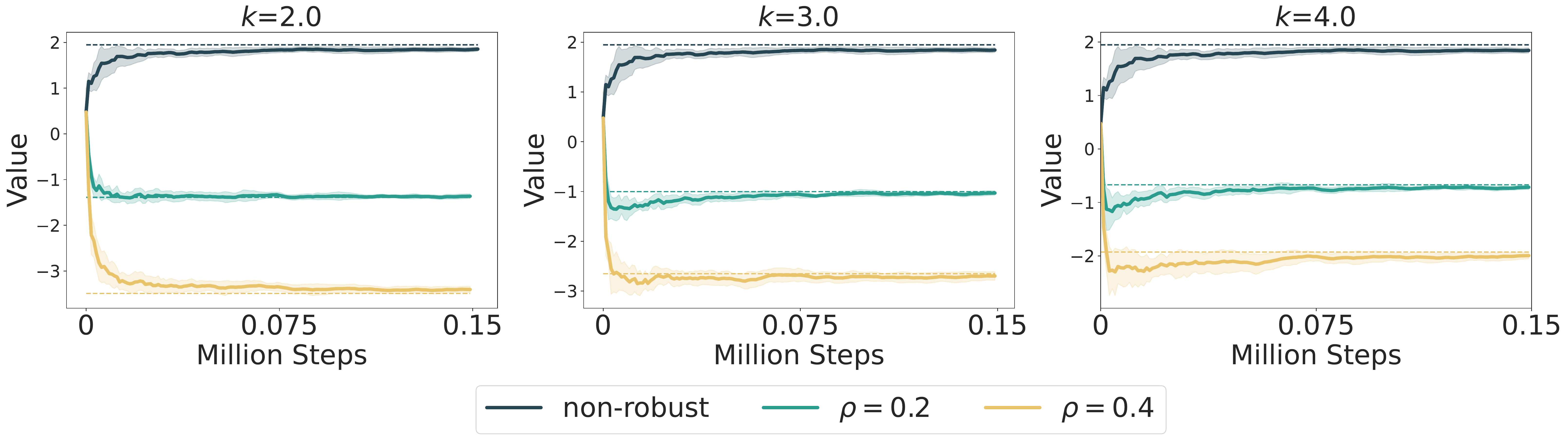}
    \caption{The training curves in the Cliffwalking environment. Each curve is averaged over 100 random seeds and shaded by their standard deviations. The dashed line is the optimal robust value with corresponding $k$ and $\rho$.}
    \label{fig:water_curves}
\end{figure*}

\textbf{Robustness:} 
To evaluate the robustness of the learned policies, we compare their cumulative returns in perturbed environments with $p \in \{0.5,0.6,0.7,0.8,0.9\}$ over 100 episodes per setting. 
We visulize the decision at each status in Figure~\ref{fig:water_env} with different robustness level $\rho$.
In particular, the more robust policy tends to avoid falling into the water, thus arrives to the goal state with a longer path by keeping going up before going right.
Figure~\ref{fig:water_return} shows the return distribution for each policy. Figure~\ref{fig:water_lengths} displays the time taken for the policies to reach the goal, and the more robust policy tends to spend more time, which quantitatively supports our observations in Figure~\ref{fig:water_env}. Interestingly, we find that the robust policies outperform the nonrobust one even in the nominal environment. 
For the different $\rho$'s, $\rho=1.0$ is the best within a relatively wide range ($p\in \{0.6,0.7,0.8\}$), while $\rho=1.5$ is preferred in the environment of extreme pertubation ($p=0.9$). 
This suggests that DRRL provides a elegant trade-off for different robustness preferences.

We also compare our model-free DRRL algorithm with the robust RL algorithm presented in \citet{wang2021online}, which also supports training using a single trajectory. 
The algorithm in \citet{wang2021online} uses an $R$-contamination ambiguity set. 
We select the best value of $R$ from $0.1$ to $0.9$ and other detailed descriptions in Appendix~\ref{sec:add_exp}. In most cases, the $R$-contamination based algorithm performs very similarly to the non-robust benchmark, and even performs worse in some cases (i.e., $p=0.8$ and $0.9$),  due to its excessive conservatism.
As we mentioned in Section~\ref{subsec:divergence}, larger $k$ would render the the risk measure less conservative and thus less sensitive to the change in the ball radius $\rho$, which is empirically confirmed by Figure~\ref{fig:kNrho}.

\begin{figure*}[htbp]
    \centering
    \includegraphics[width=0.85\textwidth]{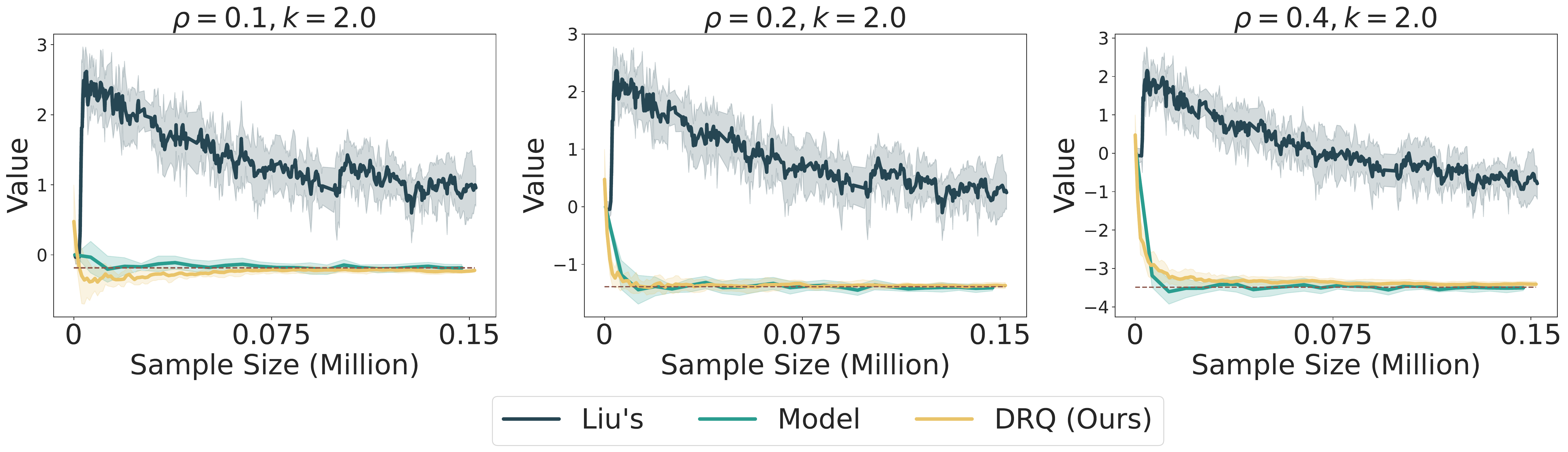}
    \caption{Sample complexity comparisons in Cliffwalking environment with Liu's and Model-based algorithms. Each curve is averaged over 100 random seeds and shaded by their standard deviations.}
    \label{fig:comparisons}
\end{figure*}

\textbf{Sample Complexity:}
The training curves in Figure~\ref{fig:water_curves} depict the estimated value $\max_a \widehat{Q}(s_0, a)$ (solid line) and the optimal robust value $V^*(s_0)$ (dashed line) for the initial state $s_0$. 
The results indicate that the estimated value converges quickly to the optimal value, regardless of the values of $k$ and $\rho$. Importantly, our DRQ algorithm achieves a similar convergence rate to the non-robust baseline (represented by the black line).
We further compare our algorithm with two robust baselines: the DRQ algorithm with a weak simulator proposed by  \citet{liu2022distributionally} (referred to as \emph{Liu's}), and the model-based algorithm introduced by \citet{panaganti2022sample} (referred to as \emph{Model}) in Figure~\ref{fig:comparisons}. 
To ensure a fair comparison, we set the same learning rate, $\zeta_3(t)$, for our DRQ algorithm and the $Q$-table update loop of the Liu's algorithm, as per their recommended choices. 

Our algorithm converges to the true DR value at a similar rate as the model-based algorithm, while the Liu's algorithm exhibits substantial deviation from the true value and converges relatively slowly. Our algorithm's superior sample efficiency is attributed to the utilization of first-order information to approximate optimal dual variables, whereas Liu's relies on a large amount of simulation data for an unbiased estimator.

\begin{figure*}[htbp]
    \centering
    \includegraphics[width=0.85\textwidth, height=6.5cm]{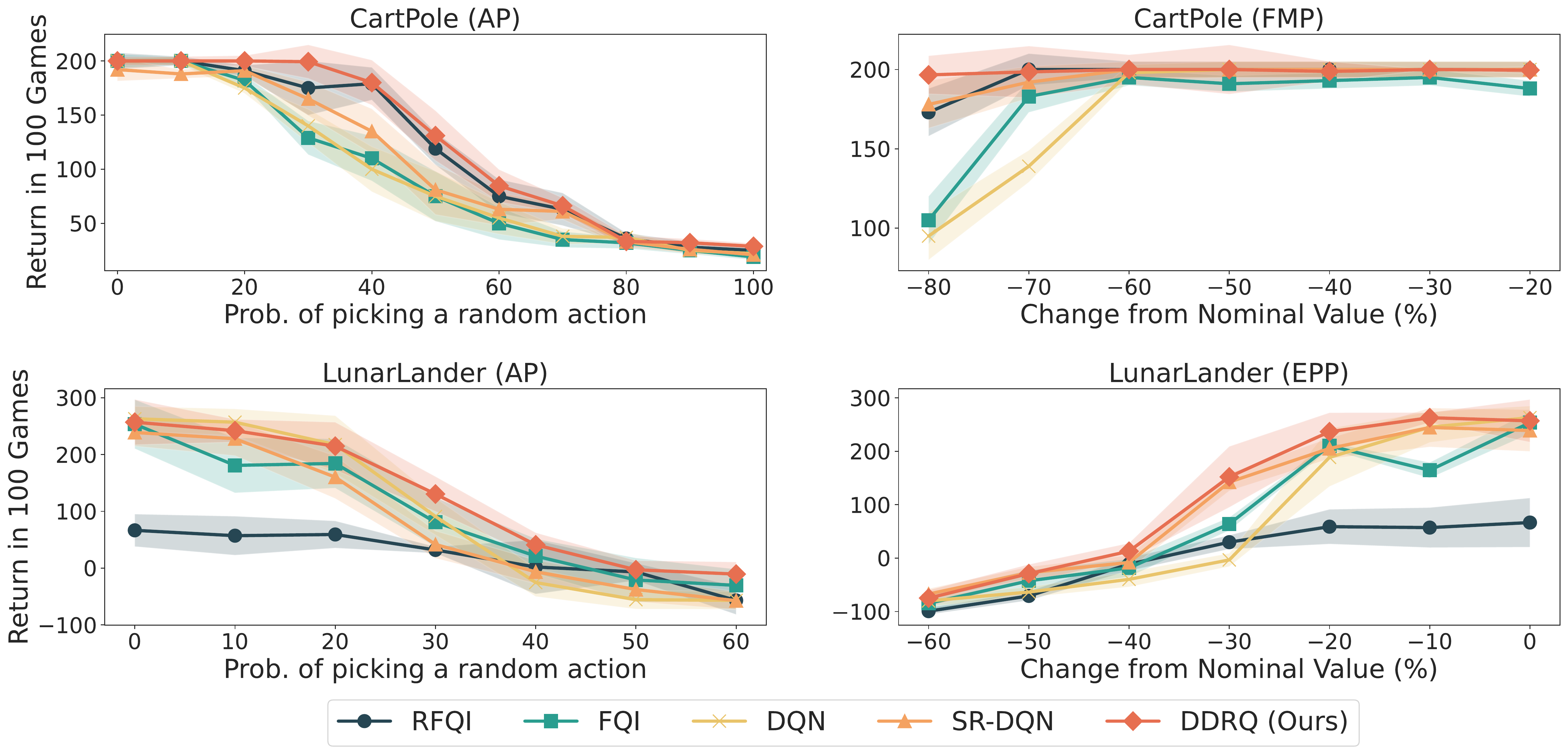}
    \caption{The return in the CartPole and LunarLander environment. Each curve is averaged over 100 random seeds and shaded by their standard deviations. AP: Action Perturbation; FMP: Force Mag Perturbation; EPP: Engines Power Perturbation.}
    \label{fig:pratical}
\end{figure*}

\subsection{Practical Implementation}
We validate the practicality of our DRQ framework by implementing a practical version, called the Deep Distributionally Robust $Q$-learning (\emph{DDRQ}) algorithm, based on the DQN algorithm \citep{mnih2015human}. We apply this algorithm to two classical control tasks from the OpenAI Gym \citep{brockman2016openai}: CartPole and LunarLander. 

Our practical algorithm, denoted as Algorithm~\ref{alg:discount_dro_Q_cressie_read_model}, is a variant of Algorithm~\ref{alg:discount_dro_Q_chi}.
Specifically, we adopt the Deep Q-Network (DQN) architecture~\citep{mnih2015human} and employ two sets of neural networks as functional approximators. One set, $Q_{\theta_1}$ and $Q_{\theta_2}$, serves as approximators for the $Q$ function, while the other set, $\eta_{\theta_3}$ and $\eta_{\theta_4}$, approximates the distributionally robust dual variable $\eta$. To enhance training stability, we introduce a target network, $Q_{\theta_2}$, for the fast $Q$ network $Q_{\theta_1}$ and $\eta_{\theta_4}$ for the fast dual variable $\eta$ network $\eta_{\theta_3}$.

Due to the approximation error introduced by neural networks and to further improve sample efficiency, our practical DDRQ algorithm adopts a two-timescale update approach. 
In this approach, our $Q$ network aims to minimize the Bellman error, while the dual variable $\eta$ network strives to maximize the DR $Q$ value defined in Equation~\ref{eq:bellman_cressie}. 
It's important to note that the two-timescale update approach could introduce bias in the convergence of the dual variable, and thus the dual variable $\eta$ may not the optimal dual variable for the primal problem. 
Given the primal-dual structure of this DR problem, this could render an even lower target value for the $Q$ network to learn. 
This approach can be understood as a robust update strategy for our original DRRL problem, share some spirits to the optimization techniques used in other algorithms like Variational Autoencoders (VAE)\citep{kingma2013auto}, Proximal Policy Optimization (PPO)\citep{schulman2017proximal}, and Maximum a Posteriori Policy Optimization (MPO)~\citep{abdolmaleki2018maximum}.  Additional experimental details can be found in Appendix \ref{subsec:practical_exp}.

To assess the effectiveness of our DDRQ algorithm, we compare it against the RFQI algorithm \citep{panaganti2022robust}, the soft-robust RL algorithm \citep{derman2018soft}, and the non-robust DQN and FQI algorithms. This comparison encompasses representative practical (robust) reinforcement learning algorithms for both online and offline datasets.
To evaluate the robustness of the learned policies, we introduce action and physical environment perturbations. For action perturbation, we simulate the perturbations by varying the probability $\epsilon$ of randomly selecting an action for both CartPole and LunarLander tasks. We test with $\epsilon\in \{0,0.1,0.2,\cdots,1.0\}$ for CartPole and $\epsilon\in \{0,0.1,0.2,\cdots,0.6\}$ for LunarLander.
Regarding physical environment perturbation in LunarLander, we decrease the power of all the main engine and side engines by the same proportions, ranging from 0 to $0.6$. For CartPole, we reduce the "force mag" parameter from $0.2$ to $0.8$.
We set the same ambiguity set radius for both our DDRQ and RFQI algorithm for fair comparisons.
Figure~\ref{fig:pratical} illustrates how our DDRQ algorithm successfully learns robust policies across all tested tasks, achieving comparable performance to other robust counterparts such as RFQI and SR-DQN. 
Conversely, the non-robust DQN and FQI algorithms fail to learn robust policies and deteriorate significantly even under slight perturbations.
It is worth noting that RFQI does not perform well in the LunarLander environment, despite using the official code provided by the authors. This outcome could be attributed to the restriction to their TV distance in constructing the ambiguity set, while our Creass-Read ambiguity set can be flexibily chosen to well adopted to the environment nature. 
Additionally, the soft-robust RL algorithm requires generating data based on multiple models within the ambiguity set. This process can be excessively time-consuming, particularly in large-scale applications.
\section{Conclusion}
In this paper, we introduce our DRQ algorithm, a fully model-free DRRL algorithm trained on a single trajectory.
By leveraging the stochastic approximation framework, we effectively tackle the joint optimization problem involving the state-action function and the DR dual variable. 
Through an extension of the classic two-timescale stochastic approximation framework, we establish the asymptotic convergence of our algorithm to the optimal DR policy. Our extensive experimentation showcases the convergence, sample efficiency, and robustness improvements achieved by our approach, surpassing non-robust methods and other robust RL algorithms.
Our DDRQ algorithm further validates the practicality of our algorithmic framework.

\section*{Acknowledgements}
This work is generously supported by the General Research Fund [Grants 16208120, and 16214121] from the Hong Kong Research Grants Council, the NSF grants CCF-2312205 and CCF-2312204.

\section*{Impact Statement}
This paper presents work whose goal is to advance the ﬁeld of Machine Learning, especially to enhance the robustness of the widely-used reinforcement learning algorithms. There are many potential societal consequences of our work, none which we feel must be speciﬁcally highlighted here.

\bibliography{reference}

\begin{thebibliography}{38}
\providecommand{\natexlab}[1]{#1}
\providecommand{\url}[1]{\texttt{#1}}
\expandafter\ifx\csname urlstyle\endcsname\relax
  \providecommand{\doi}[1]{doi: #1}\else
  \providecommand{\doi}{doi: \begingroup \urlstyle{rm}\Url}\fi

\bibitem[Abdolmaleki et~al.(2018)Abdolmaleki, Springenberg, Tassa, Munos,
  Heess, and Riedmiller]{abdolmaleki2018maximum}
Abdolmaleki, A., Springenberg, J.~T., Tassa, Y., Munos, R., Heess, N., and
  Riedmiller, M.
\newblock Maximum a posteriori policy optimisation.
\newblock \emph{arXiv preprint arXiv:1806.06920}, 2018.

\bibitem[Abdullah et~al.(2019)Abdullah, Ren, Ammar, Milenkovic, Luo, Zhang, and
  Wang]{abdullah2019wasserstein}
Abdullah, M.~A., Ren, H., Ammar, H.~B., Milenkovic, V., Luo, R., Zhang, M., and
  Wang, J.
\newblock Wasserstein {{Robust Reinforcement Learning}}, 2019.
\newblock URL \url{http://arxiv.org/abs/1907.13196}.

\bibitem[Badrinath \& Kalathil(2021)Badrinath and
  Kalathil]{badrinath2021robust}
Badrinath, K.~P. and Kalathil, D.
\newblock Robust reinforcement learning using least squares policy iteration
  with provable performance guarantees.
\newblock In \emph{International Conference on Machine Learning}, pp.\
  511--520. PMLR, 2021.

\bibitem[Blanchet \& Glynn(2015)Blanchet and Glynn]{blanchet2015unbiased}
Blanchet, J.~H. and Glynn, P.~W.
\newblock Unbiased monte carlo for optimization and functions of expectations
  via multi-level randomization.
\newblock In \emph{2015 Winter Simulation Conference (WSC)}, pp.\  3656--3667.
  IEEE, 2015.

\bibitem[Borkar(2009)]{borkar2009stochastic}
Borkar, V.~S.
\newblock \emph{Stochastic approximation: a dynamical systems viewpoint},
  volume~48.
\newblock Springer, 2009.

\bibitem[Borkar \& Meyn(2000)Borkar and Meyn]{borkar2000ode}
Borkar, V.~S. and Meyn, S.~P.
\newblock The ode method for convergence of stochastic approximation and
  reinforcement learning.
\newblock \emph{SIAM Journal on Control and Optimization}, 38\penalty0
  (2):\penalty0 447--469, 2000.

\bibitem[Brockman et~al.(2016)Brockman, Cheung, Pettersson, Schneider,
  Schulman, Tang, and Zaremba]{brockman2016openai}
Brockman, G., Cheung, V., Pettersson, L., Schneider, J., Schulman, J., Tang,
  J., and Zaremba, W.
\newblock Openai gym.
\newblock \emph{arXiv preprint arXiv:1606.01540}, 2016.

\bibitem[Cox et~al.(1979)Cox, Ross, and Rubinstein]{cox1979option}
Cox, J.~C., Ross, S.~A., and Rubinstein, M.
\newblock Option pricing: A simplified approach.
\newblock \emph{Journal of financial Economics}, 7\penalty0 (3):\penalty0
  229--263, 1979.

\bibitem[Cressie \& Read(1984)Cressie and Read]{cressie1984multinomial}
Cressie, N. and Read, T.~R.
\newblock Multinomial goodness-of-fit tests.
\newblock \emph{Journal of the Royal Statistical Society: Series B
  (Methodological)}, 46\penalty0 (3):\penalty0 440--464, 1984.

\bibitem[Del{\'e}tang et~al.(2021)Del{\'e}tang, Grau-Moya, Kunesch, Genewein,
  Brekelmans, Legg, and Ortega]{deletang2021model}
Del{\'e}tang, G., Grau-Moya, J., Kunesch, M., Genewein, T., Brekelmans, R.,
  Legg, S., and Ortega, P.~A.
\newblock Model-free risk-sensitive reinforcement learning.
\newblock \emph{arXiv preprint arXiv:2111.02907}, 2021.

\bibitem[Derman et~al.(2018)Derman, Mankowitz, Mann, and
  Mannor]{derman2018soft}
Derman, E., Mankowitz, D.~J., Mann, T.~A., and Mannor, S.
\newblock Soft-robust actor-critic policy-gradient.
\newblock \emph{arXiv preprint arXiv:1803.04848}, 2018.

\bibitem[Duchi \& Namkoong(2021)Duchi and Namkoong]{duchi2021learning}
Duchi, J.~C. and Namkoong, H.
\newblock Learning models with uniform performance via distributionally robust
  optimization.
\newblock \emph{The Annals of Statistics}, 49\penalty0 (3):\penalty0
  1378--1406, 2021.

\bibitem[Goyal \& Grand-Clement(2022)Goyal and Grand-Clement]{goyal2022robust}
Goyal, V. and Grand-Clement, J.
\newblock Robust markov decision processes: Beyond rectangularity.
\newblock \emph{Mathematics of Operations Research}, 2022.

\bibitem[Ho et~al.(2021)Ho, Petrik, and Wiesemann]{ho2021partial}
Ho, C.~P., Petrik, M., and Wiesemann, W.
\newblock Partial policy iteration for l1-robust markov decision processes.
\newblock \emph{J. Mach. Learn. Res.}, 22:\penalty0 275--1, 2021.

\bibitem[Iyengar(2005)]{iyengar2005robust}
Iyengar, G.~N.
\newblock Robust dynamic programming.
\newblock \emph{Mathematics of Operations Research}, 30\penalty0 (2):\penalty0
  257--280, 2005.

\bibitem[Kingma \& Welling(2013)Kingma and Welling]{kingma2013auto}
Kingma, D.~P. and Welling, M.
\newblock Auto-encoding variational bayes.
\newblock \emph{arXiv preprint arXiv:1312.6114}, 2013.

\bibitem[Lim et~al.(2013)Lim, Xu, and Mannor]{lim2013reinforcement}
Lim, S.~H., Xu, H., and Mannor, S.
\newblock Reinforcement learning in robust markov decision processes.
\newblock \emph{Advances in Neural Information Processing Systems}, 26, 2013.

\bibitem[Liu et~al.(2022)Liu, Bai, Blanchet, Dong, Xu, Zhou, and
  Zhou]{liu2022distributionally}
Liu, Z., Bai, Q., Blanchet, J., Dong, P., Xu, W., Zhou, Z., and Zhou, Z.
\newblock Distributionally robust $ q $-learning.
\newblock In \emph{International Conference on Machine Learning}, pp.\
  13623--13643. PMLR, 2022.

\bibitem[Ma et~al.(2022)Ma, Liang, Xia, Zhang, Blanchet, Liu, Zhao, and
  Zhou]{ma2022distributionally}
Ma, X., Liang, Z., Xia, L., Zhang, J., Blanchet, J., Liu, M., Zhao, Q., and
  Zhou, Z.
\newblock Distributionally robust offline reinforcement learning with linear
  function approximation.
\newblock \emph{arXiv preprint arXiv:2209.06620}, 2022.

\bibitem[Mannor et~al.(2004)Mannor, Simester, Sun, and
  Tsitsiklis]{mannor2004bias}
Mannor, S., Simester, D., Sun, P., and Tsitsiklis, J.~N.
\newblock Bias and variance in value function estimation.
\newblock In \emph{Proceedings of the twenty-first international conference on
  Machine learning}, pp.\ ~72, 2004.

\bibitem[Mnih et~al.(2015)Mnih, Kavukcuoglu, Silver, Rusu, Veness, Bellemare,
  Graves, Riedmiller, Fidjeland, Ostrovski, et~al.]{mnih2015human}
Mnih, V., Kavukcuoglu, K., Silver, D., Rusu, A.~A., Veness, J., Bellemare,
  M.~G., Graves, A., Riedmiller, M., Fidjeland, A.~K., Ostrovski, G., et~al.
\newblock Human-level control through deep reinforcement learning.
\newblock \emph{nature}, 518\penalty0 (7540):\penalty0 529--533, 2015.

\bibitem[Neufeld \& Sester(2022)Neufeld and Sester]{Neufeld2022RobustQA}
Neufeld, A. and Sester, J.
\newblock Robust q-learning algorithm for markov decision processes under
  wasserstein uncertainty.
\newblock \emph{ArXiv}, abs/2210.00898, 2022.

\bibitem[Nilim \& El~Ghaoui(2005)Nilim and El~Ghaoui]{nilim2005robust}
Nilim, A. and El~Ghaoui, L.
\newblock Robust control of markov decision processes with uncertain transition
  matrices.
\newblock \emph{Operations Research}, 53\penalty0 (5):\penalty0 780--798, 2005.

\bibitem[Panaganti \& Kalathil(2022)Panaganti and
  Kalathil]{panaganti2022sample}
Panaganti, K. and Kalathil, D.
\newblock Sample complexity of robust reinforcement learning with a generative
  model.
\newblock In \emph{International Conference on Artificial Intelligence and
  Statistics}, pp.\  9582--9602. PMLR, 2022.

\bibitem[Panaganti et~al.(2022)Panaganti, Xu, Kalathil, and
  Ghavamzadeh]{panaganti2022robust}
Panaganti, K., Xu, Z., Kalathil, D., and Ghavamzadeh, M.
\newblock Robust reinforcement learning using offline data.
\newblock \emph{Advances in neural information processing systems},
  35:\penalty0 32211--32224, 2022.

\bibitem[Roy et~al.(2017)Roy, Xu, and Pokutta]{roy2017reinforcement}
Roy, A., Xu, H., and Pokutta, S.
\newblock Reinforcement learning under model mismatch.
\newblock \emph{Advances in neural information processing systems}, 30, 2017.

\bibitem[Schulman et~al.(2017)Schulman, Wolski, Dhariwal, Radford, and
  Klimov]{schulman2017proximal}
Schulman, J., Wolski, F., Dhariwal, P., Radford, A., and Klimov, O.
\newblock Proximal policy optimization algorithms.
\newblock \emph{arXiv preprint arXiv:1707.06347}, 2017.

\bibitem[Shi \& Chi()Shi and Chi]{shi2022distributionally}
Shi, L. and Chi, Y.
\newblock Distributionally {{Robust Model-Based Offline Reinforcement
  Learning}} with {{Near-Optimal Sample Complexity}}.
\newblock URL \url{http://arxiv.org/abs/2208.05767}.

\bibitem[Silver et~al.(2016)Silver, Huang, Maddison, Guez, Sifre, Van
  Den~Driessche, Schrittwieser, Antonoglou, Panneershelvam, Lanctot,
  et~al.]{silver2016mastering}
Silver, D., Huang, A., Maddison, C.~J., Guez, A., Sifre, L., Van Den~Driessche,
  G., Schrittwieser, J., Antonoglou, I., Panneershelvam, V., Lanctot, M.,
  et~al.
\newblock Mastering the game of go with deep neural networks and tree search.
\newblock \emph{nature}, 529\penalty0 (7587):\penalty0 484--489, 2016.

\bibitem[Tamar et~al.(2014)Tamar, Mannor, and Xu]{tamar2013scaling}
Tamar, A., Mannor, S., and Xu, H.
\newblock Scaling up robust mdps using function approximation.
\newblock In \emph{International conference on machine learning}, pp.\
  181--189. PMLR, 2014.

\bibitem[Tsitsiklis(1994)]{tsitsiklis1994asynchronous}
Tsitsiklis, J.~N.
\newblock Asynchronous stochastic approximation and q-learning.
\newblock \emph{Machine learning}, 16:\penalty0 185--202, 1994.

\bibitem[Vinyals et~al.(2019)Vinyals, Babuschkin, Czarnecki, Mathieu, Dudzik,
  Chung, Choi, Powell, Ewalds, Georgiev, Oh, Horgan, Kroiss, Danihelka, Huang,
  Sifre, Cai, Agapiou, Jaderberg, Vezhnevets, Leblond, Pohlen, Dalibard,
  Budden, Sulsky, Molloy, Paine, Gulcehre, Wang, Pfaff, Wu, Ring, Yogatama,
  Wünsch, McKinney, Smith, Schaul, Lillicrap, Kavukcuoglu, Hassabis, Apps, and
  Silver]{vinyals2019grandmaster}
Vinyals, O., Babuschkin, I., Czarnecki, W.~M., Mathieu, M., Dudzik, A., Chung,
  J., Choi, D.~H., Powell, R., Ewalds, T., Georgiev, P., Oh, J., Horgan, D.,
  Kroiss, M., Danihelka, I., Huang, A., Sifre, L., Cai, T., Agapiou, J.~P.,
  Jaderberg, M., Vezhnevets, A.~S., Leblond, R., Pohlen, T., Dalibard, V.,
  Budden, D., Sulsky, Y., Molloy, J., Paine, T.~L., Gulcehre, C., Wang, Z.,
  Pfaff, T., Wu, Y., Ring, R., Yogatama, D., Wünsch, D., McKinney, K., Smith,
  O., Schaul, T., Lillicrap, T., Kavukcuoglu, K., Hassabis, D., Apps, C., and
  Silver, D.
\newblock Grandmaster level in {{StarCraft II}} using multi-agent reinforcement
  learning.
\newblock 575\penalty0 (7782):\penalty0 350--354, 2019.
\newblock \doi{10.1038/s41586-019-1724-z}.

\bibitem[Wang \& Zou(2021)Wang and Zou]{wang2021online}
Wang, Y. and Zou, S.
\newblock Online robust reinforcement learning with model uncertainty.
\newblock \emph{Advances in Neural Information Processing Systems},
  34:\penalty0 7193--7206, 2021.

\bibitem[Wiesemann et~al.(2013)Wiesemann, Kuhn, and
  Rustem]{wiesemann2013robust}
Wiesemann, W., Kuhn, D., and Rustem, B.
\newblock Robust markov decision processes.
\newblock \emph{Mathematics of Operations Research}, 38\penalty0 (1):\penalty0
  153--183, 2013.

\bibitem[Xu \& Mannor(2010)Xu and Mannor]{xu2010distributionally}
Xu, H. and Mannor, S.
\newblock Distributionally robust markov decision processes.
\newblock \emph{Advances in Neural Information Processing Systems}, 23, 2010.

\bibitem[Yang(2018)]{Yang2018WassersteinDR}
Yang, I.
\newblock Wasserstein distributionally robust stochastic control: A data-driven
  approach.
\newblock \emph{IEEE Transactions on Automatic Control}, 66:\penalty0
  3863--3870, 2018.

\bibitem[Yang et~al.(2022)Yang, Zhang, and Zhang]{yang2021theoretical}
Yang, W., Zhang, L., and Zhang, Z.
\newblock Toward theoretical understandings of robust markov decision
  processes: Sample complexity and asymptotics.
\newblock \emph{The Annals of Statistics}, 50\penalty0 (6):\penalty0
  3223--3248, 2022.

\bibitem[Zhou et~al.(2021)Zhou, Zhou, Bai, Qiu, Blanchet, and
  Glynn]{zhou2021finitesample}
Zhou, Z., Zhou, Z., Bai, Q., Qiu, L., Blanchet, J., and Glynn, P.
\newblock Finite-sample regret bound for distributionally robust offline
  tabular reinforcement learning.
\newblock In \emph{International Conference on Artificial Intelligence and
  Statistics}, pp.\  3331--3339. PMLR, 2021.

\end{thebibliography}
\bibliographystyle{icml2024}

\newpage
\appendix
\onecolumn
\section*{Appendix}

In the subsequent sections, we delve into the experimental specifics and provide the technical proofs that were not included in the primary content.

In Section~\ref{sec:add_exp}, we commence by showcasing an additional experiment on the American call option. This aligns with the convergence and sample complexity discussions from the main content. We then elucidate the intricacies of Liu's algorithm to facilitate a transparent comparison with our methodology. Lastly, we discuss the algorithmic intricacies of our DDRQ algorithm and provide details on the experiments that were previously omitted.

In Section~\ref{sec:multi_timescale}, to prove Theorem~\ref{thm:convergence_discount_dro_chi}, we begin by extending the two-timescale stochastic approximation framework to a three-timescale one. Following this, we adapt it to our algorithm, ensuring all requisite conditions are met.

\section{Additional Experiments Details}
\label{sec:add_exp}
\subsection{Experiment on the American Put Option Problem}
In this section, we present additional experimental results from a simulated American put option problem~\citep{cox1979option} that has been previously studied in robust RL literature ~\citep{zhou2021finitesample,tamar2013scaling}.
The problem involves holding a put option in multiple stages, whose payoff depends on the price of a financial asset that follows a Bernoulli distribution. 
Specifically, the next price $s_{h+1}$ at stage $h+1$ follows, 
\begin{equation}
s_{h+1}= \begin{cases}c_{u} s_{h}, & \text { w.p. } p_0, \\ c_{d} s_{h}, & \text { w.p. } 1-p_0,\end{cases}
\end{equation}
where the $c_u$ and $c_d$ are the price up and down factors and $p_0$ is the probability that the price goes up. The initial price $s_0$ is uniformly sampled from $[\kappa - \epsilon, \kappa + \epsilon]$, where $\kappa=100$ is the strike price and $\epsilon=5$ in our simulation. The agent can take an action to exercise the option ($a_h=1$) or not exercise ($a_h=0$) at the time step $h$. If exercising the option, the agent receives a reward $\max(0, \kappa-s_h)$ and the state transits into an exit state.
Otherwise, the price will fluctuate based on the above model and no reward will be assigned. 
Moreover we introduce a discount structure in this problem, i.e., the $1$ reward in the stage $h+1$ worths $\gamma$ in stage $h$ as our algorithm is designed for discounted RL setting. 
In our experiments, we set $H=5$, $c_u=1.02$, $c_d=0.98$ and $\gamma = 0.95$. We limit the price in $[80, 140]$ and discretize with the precision of 1 decimal place. Thus the state space size $|\mathcal{S}|=602$.

\begin{figure}[htbp]
    \centering
    \includegraphics[width=0.4\textwidth]{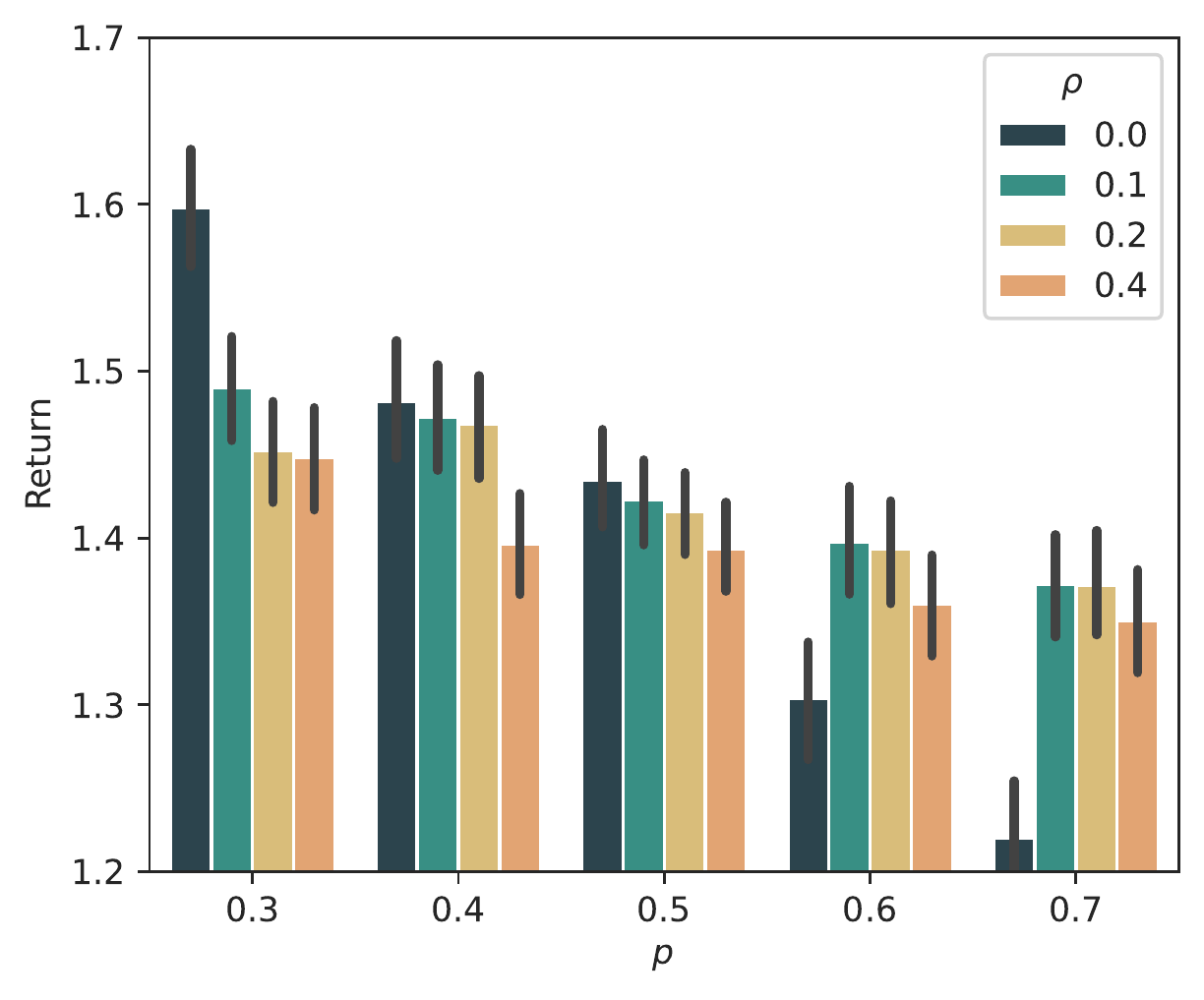}
    \caption{Averaged return in the American call option problem. $\rho=0.0$ is the non-robust $Q$-learning.}
    \label{fig:option_return}
\end{figure}

We first demonstrate the robustness gain of our DR $Q$-learning algorithm by comparing with the non-robust $Q$-learning algorithm, and investigate the effect of different robustness levels by varying $\rho$. 
Each agent is trained for $10^7$ steps with an $\epsilon$-greedy exploration policy of $\epsilon=0.2$ and evaluated in perturbed environments. 
We use the same learning rates for the three timescales in our DR $Q$-learning algorithm as in the Cliffwalking environment: $\zeta_1(t) = 1/(1+(1-\gamma) t^{0.6})$, $\zeta_2(t) = 1/(1+0.1*(1-\gamma) t^{0.8})$, and $\zeta_3(t) = 1/(1+0.01*(1-\gamma) t)$.
For the non-robust $Q$-learning we set the same learning rate as in our $Q$-update, i.e., $\zeta_3(t)$.
We perturb the transition probability to the price up and down status $p=\{0.3,0.4, 0.5, 0.6, 0.7\}$, and evaluate each agent for $5000$ episodes.
Figure~\ref{fig:option_return} reports the average return and one standard deviation level.  
The non-robust $Q$-learning performs best when the price tends to decrease and the market gets more benefitial ($p=\{0.3, 0.4, 0.5\}$), which benefits the return of holding an American put option. 
However, when the prices tend to increase and the market is riskier ($p=\{0.6, 0.7\}$), our DR $Q$-learning algorithm significantly outperforms the non-robust counterpart, demonstrating the robustness gain of our algorithm against worst-case scenarios.

\begin{figure}[htbp]
    \centering
    \includegraphics[width=\columnwidth]{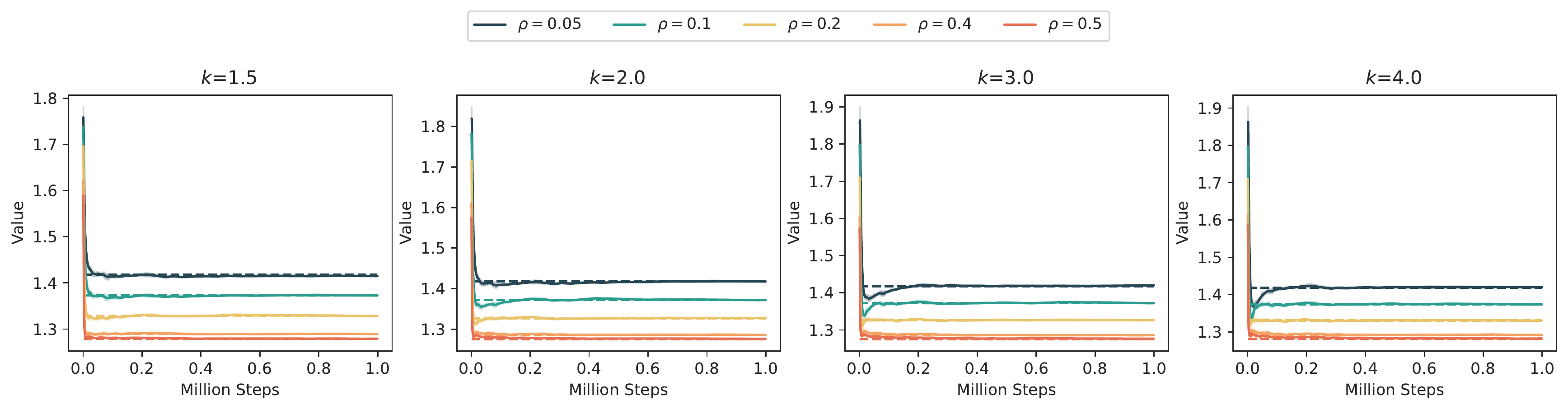}
    \caption{Convergence curve of DR $Q$-learning algorithm to the true DR value under different $\rho$'s and $k$'s. Each curve is averaged over 10 random seeds and shaded by their standard deviation. The dashed line is the optimal robust value with corresponding $k$ and $\rho$.}
    \label{fig:option_curve}
\end{figure}

We present the learning curve of our DR $Q$-learning algorithm with different $\rho$ in Figure~\ref{fig:option_curve}.
Our algorithm can accurately learn the DR value under different $\rho$'s and $k$'s within $0.1$ million steps. 
We compare the sample efficiency of our algorithm with the DR $Q$-learning algorithm in \citet{liu2022distributionally} (referred to as \emph{Liu's}) and the model-based algorithm in \citet{panaganti2022sample} (referred to as \emph{Model}). 
We set a smaller learning rate for Liu's as $\zeta(t) = 1/(1+(1-\gamma) t)$.
The reason is setting the same learning rate $\zeta_3(t)$ for their algorithm would render a much slower convergence performance, which is not fair for comparisons.
We use the recommended choice $\varepsilon = 0.5$ for the sampling procedure in Liu algorithm.
Both DR $Q$-learning and Liu are trained for $5*10^7$ steps per run, while the model-based algorithm is trained for $10^6$ steps per run to ensure sufficient samples for convergence. 
As shown in Figure~\ref{fig:option_comparisons}, 
the model-based approach is the most sample-efficient, converging accurately to the optimal robust value with less than $10^4$ samples. 
Our DR $Q$-learning algorithm is slightly less efficient, using $10^5$ samples to converge.
Liu algorithm is significantly less efficient, using $10^7$ samples to converge.
Note that the model-based approach we compared here is to first obtain samples for each state-action pairs, and then conduct the learning procedure to learn the optimal robust value.
In particular, we need to specify the number of samples for each state-action pair $n$. 
Then the total number of samples used is the sum of all these number, i.e., $S\times A\times n$, whose computation manner is different from that in the model-free algorithms we used where each update requires one or a batch of new samples.

To ensure self-containment, we provide the pseudocode for our implemented Liu algorithm (Algorithm~\ref{alg:discount_dro_Q_cressie_read_Liu}) and the model-based algorithm (Algorithm~\ref{alg:discount_dro_Q_cressie_read_model}) below. These algorithms were not originally designed to solve the ambiguity set constructed by the Cressie-Read family of $f$-divergences.

\subsection{Liu's Algorithm Descriptions}
In this subsection, we provide the pseudo-code for the Liu algorithm, represented in Algorithm~\ref{alg:Liu}. Our intention is to emphasize the differences in algorithmic design between their approach and ours.

Their algorithm, in particular, relies extensively on multi-level Monte Carlo, requiring the sampling of a batch of samples for each state-action pair. Once they estimate the Doubly Robust (DR) value for a specific state-action pair, the samples are promptly discarded and subsequently resampled from a simulator. To summarize, their algorithm exhibits significant distinctions from ours in terms of algorithmic design.

\begin{algorithm}[ht]
    \caption{Distributionally Robust Deep $Q$-learning with Cressie-Read family of $f$-divergences}
    \label{alg:discount_dro_Q_cressie_read_model}
    \begin{algorithmic}[1]
    \STATE {\bfseries Input:} Discount Factor $\gamma$, Radius of robustness $\rho$, Cressie-Read family parameter $k$, $Q$-network target update rate $\tau_{Q}$ and $\eta$-network target update rate $\tau_{\eta}$, mini-batch size $N$, maximum number of iterations $T$, start training timestep $T_{tr}$, training network update frequency $F_{tr}$ and target network update frequency $F_{up}$. 
    \STATE {\bfseries Init:} Two state-action neural networks $Q_{\theta_1}$ and $Q_{\theta_2}$, two dual neural network $\eta_{\theta_1}$ and $\eta_{\theta_2}$,
    $C = (1+k*(k-1) * \rho)^{1/k}$.
    \FOR {for $t=1,\cdots,T$}
        \STATE Observe a state $s_t$ and execute an action $a_t$ using $\epsilon$-greedy policy.
        \IF{$t\ge  T_{tr}$ and $t\% F_{tr}$}
        \STATE Sample a minibatch B with N samples from the replay buffer.
        \STATE Compute next-state target value for $Q$ network
        \begin{align*}
            Q_i = r_t - \gamma C * (\eta_{\theta_1}(s_i, a_i) - \max_{a\in \gA} Q_{\theta_{1}}(s_i, a_i))_+^{k_*},\quad \forall i \in B
        \end{align*}
        and for $\eta$ network
        \begin{align*}
            Q_i^{\prime} = r_t - \gamma C * (\eta_{\theta_2}(s_i, a_i) - \max_{a\in \gA} Q_{\theta_{2}}(s_i, a_i))_+^{k_*},\quad \forall i \in B.
        \end{align*}
        \STATE Update $\theta_1 =\arg\min_{\theta} \sum_i (Q_i - Q_{\theta}(s_i,a_i))^2$.
        \STATE Update $\theta_3 =\arg\max_{\theta} \sum_i Q_i^{\prime}(\theta)$.
        \ENDIF
        \IF{$t\ge T_{tr}$ and $t\% F_{up}$}
        \STATE Update target network $\theta_2 = (1-\tau_Q) \theta_2 + \tau_Q \theta_2$, $\theta_4 = (1-\tau_\eta) \theta_4 + \tau_\eta \theta_3$.
        \ENDIF
    \ENDFOR
    \STATE $t= t+1$
    \end{algorithmic}
    \label{alg:Liu}
\end{algorithm}

\begin{algorithm*}[ht]
    \caption{Distributionally Robust $Q$-learning with Cressie-Read family of $f$-divergences \textbf{with Simulator}}
    \label{alg:discount_dro_Q_cressie_read_Liu}
    \begin{algorithmic}[1]
    \STATE {\bfseries Input:} Exploration rate $\epsilon$, Learning rates $\{\zeta_i(n)\}_{i\in[3]}$, Ambiguity set radius $\rho>0$, parameter $\varepsilon\in (0,0.5)$
    \STATE {\bfseries Init:} $\widehat{Q}(s,a) = 0, \forall (s,a)\in \mathcal{S}\times\mathcal{A}$ 
    \WHILE {Not Converge}
    \FOR {every $(s,a)\in \gS\times\gA$}
    \STATE Sample $N\in \sN$ from $P(N=n) = p_n = \varepsilon(1-\varepsilon)^n$.
    \STATE Draw $2^{N+1}$ samples $\{(r_i, s_i')\}_{i\in [2^{N+1}]}$ from the simulator
    \STATE Compute $\Delta_{N, \rho}^{r}$ via
    \begin{align*}
        \Delta_{N, \rho}^{r} = \sup_{\eta\in \sR} \widehat{\sigma}_k^{r}([2^{N+1}], \eta) - \frac{1}{2}\sup_{\eta\in \sR} \widehat{\sigma}_k^{r}([2^{N}], \eta) - \frac{1}{2}\sup_{\eta\in \sR} \widehat{\sigma}_k^{r}([2^{N}:], \eta),
    \end{align*}
    where 
    \begin{align*}
        \sup_{\eta\in \sR} \widehat{\sigma}_k^{r}(I, \eta) = \sup_{\eta\in \sR} \{-c_k(\rho) [\sum_{i\in I}(\eta - r_i)_+^{k_*}/n]^{\frac{1}{k_*}} + \eta\},
    \end{align*}
    and $[2^{N}] = \{1,2,3,\cdots,2^{N}\}$ and $[2^{N}:]=\{2^{N}, 2^{N} + 1,\cdots, 2^{N+1}\}$.
    \STATE Compute $\Delta_{N, \rho}^{q}(\widehat{Q}_t)$ via
    \begin{align*}
        \Delta_{N, \rho}^{q}(\widehat{Q}_t) = \sup_{\eta\in \sR} \widehat{\sigma}_k^{q}(\widehat{Q}_t, [2^{N+1}], \eta) - \frac{1}{2}\sup_{\eta\in \sR} \widehat{\sigma}_k^{q}(\widehat{Q}_t, [2^{N}], \eta) - \frac{1}{2}\sup_{\eta\in \sR} \widehat{\sigma}_k^{q}(\widehat{Q}_t, [2^{N}:], \eta),
    \end{align*}
    where 
    \begin{align*}
        \sup_{\eta\in \sR} \widehat{\sigma}_k^{q}(\widehat{Q}_t, I, \eta) = \sup_{\eta\in \sR} \{-c_k(\rho) [\sum_{i\in I}(\eta - \max_{a'\in \gA}\widehat{Q}_t(s_i', a'))_+^{k_*}/n]^{\frac{1}{k_*}} + \eta\}.
    \end{align*}
    \STATE Set $R_{\rho}(s,a) = r_1 + \frac{\Delta_{N, \rho}^{r}}{p_N}$.
    \STATE Update $Q$ via 
    \begin{align*}
        \widehat{Q}_{t+1}(s,a) = (1-\zeta_t) \widehat{Q}_{t}(s,a) + \zeta_t \widehat{\gT}_{\rho}(\widehat{Q}_t)(s,a),
    \end{align*}
    where 
    \begin{align*}
        \widehat{\gT}_{\rho}(\widehat{Q}_t)(s,a) = r_1 + \Delta_{N, \rho}^{r} +\gamma(\max_{a'\in \gA} \widehat{Q}_t(s_1, a') + \frac{\Delta_{N, \rho}^{q}(\widehat{Q}_t)}{p_N}).
    \end{align*}
    \ENDFOR
    \STATE $t= t+1$
    \ENDWHILE
    \end{algorithmic}
\end{algorithm*}

\begin{figure}[htbp]
    \centering
    \includegraphics[width=\columnwidth]{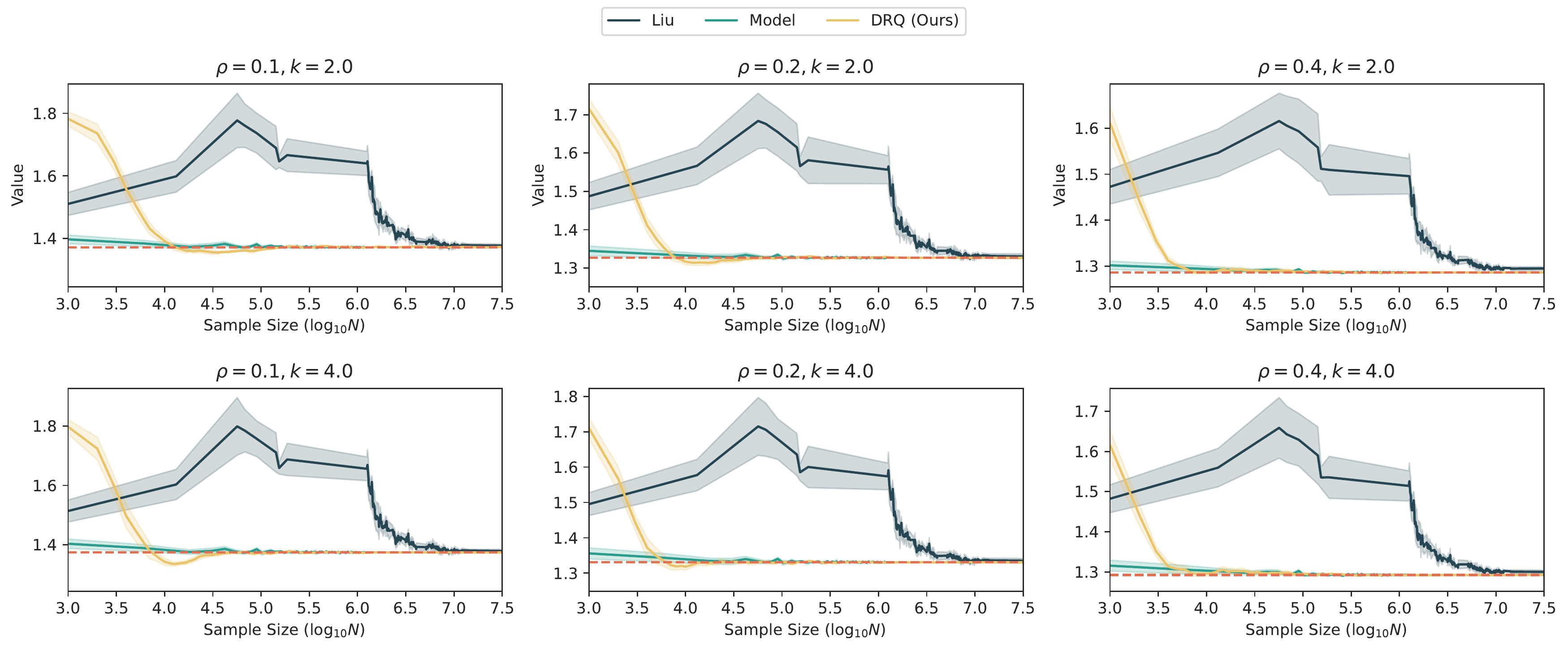}
    \caption{Sample complexity comparisons in  American option environment with other DRRL algorithms. 
    The dashed line is the optimal robust value with corresponding $k$ and $\rho$. The $x$-axis is in log10 scale. Each curve is averaged over 10 random seeds and shaded by their one standard deviation.
    The dashed line is the optimal robust value with corresponding $k$ and $\rho$.}
    \label{fig:option_comparisons}
\end{figure}

\subsection{Practical Experiments}
\label{subsec:practical_exp}
In this section, we provide a comprehensive description of our Deep Distributionally Robust $Q$-learning (DDRQ) algorithm, as illustrated in Algorithm~\ref{alg:discount_dro_Q_cressie_read_model}, along with its experimental setup in the context of CaroPole and LunarLander.

Most of the hyperparameters are set the same for both LunarLander and CartPole. 
We choose Cressie-Read family parameter $k=2$, which is indeed the $\chi^2$ ambiguity set and we set ambiguity set radius as $\rho = 0.3$. 
For RFQI we also use the same $\rho$ for fair comparison. Our replay buffer size is set $1e6$ and the batch size for training is set $4096$. 
Our fast $Q$ and $\eta$ network are update every 10 steps ($F_{tr} = 10$) and the target networks are updated every 500 steps ($F_{up} = 500$). The learning rate for $Q$ network is $2.5 \times 10^{-4}$ and for $\eta$ network is $2.5 \times 10^{-3}$.
The $Q$ network and the $\eta$ network both employ a dual-layer structure, with each layer consisting of 120 dimensions.
For exploration scheme, we choose epsilon-greedy exploration with linearly decay epsilon with ending $\epsilon_{End}$.
The remain parameters tuned for each environments are referred in Table~\ref{tbl:diff_params}.

\begin{table}
    \centering
    \begin{tabular}{c|c|c|c|c}
    \toprule
      Environment & Maximum Training Step $T$ & $\epsilon_{End}$ & $\tau_Q$ & $\tau_\eta$ \\
    \midrule
      CartPole & $1e8$ & $0.05$ & $1$ &  $0.05$ \\
      LunarLander & $3e7$ & $0.2$ & $0.5$ &  $0.1$ \\
    \bottomrule
    \end{tabular}
    \caption{Different Hyperparamers between CartPole and LunarLander}
    \label{tbl:diff_params}    
  \end{table}

\section{Multiple Timescale Convergence}
\label{sec:multi_timescale}
We fix some notations that will be used in the following proof.
For a positive integer $n$, $[n]$ denotes the set $\{1,2,\cdots,n\}$. $\lvert A \rvert $ denotes the cardinality of the set $A$.
We adopt the standard asymptotic notations: for two non-negative sequences ${a_n}$ and ${b_n}$, ${a_n} = O({b_n})$ iff $\lim \sup_{n\rightarrow \infty} a_n/b_n < \infty$. 
$\Delta_{d}$ is the simplex on a $d$ dimensional space, i.e., $\Delta_d = \{x:\sum_{i=1}^d x_i = 1, x_i \ge 0, \forall i\in[d]\}$.
For any vector $x\in \sR^{d}$ and any semi-positive matrix $A\in \sR^{d\times d}$ with $A\succeq 0$, we denote $\lVert x\rVert_{A} \coloneqq \sqrt{x^{\top}Ax}$.
$\lVert \cdot \rVert$ is Euclidean norm.

\subsection{Three Timescales Convergence Analysis}
In this subsection, we outline the roadmap for establishing the a.s. convergence of the Algorithm~\ref{alg:discount_dro_Q_chi}.
For ease of presentation, our analysis is given for the synchronous case, where every entry of the $Q$ function is updated at each timestep. Extension to the asynchronous case, where only one state-action pair entry is updated at each timestep, follows \citet{tsitsiklis1994asynchronous}.
Our approach is to generalize the classic machinery of two-timescale stochastic approximation~\citep{borkar2009stochastic} to a three-timescale framework, and use it to analyze our proposed algorithm.
We rewrite the Algorithm~\ref{alg:discount_dro_Q_chi}  as
\begin{align}
 Z_{n+1} &= Z_{n} + \zeta_1(n)[f(Z_{n}, \eta_n, Q_{n}) + M_{n}^{Z}], \label{eq:ode_Z}\\
 \eta_{n+1} &= \eta_n + \zeta_2(n) [g(Z_{n}, \eta_n, Q_{n}) + \epsilon_n^{\eta}], \label{eq:ode_eta}\\
 Q_{n+1} &= Q_n + \zeta_3(n) [h(Z_{n}, \eta_n, Q_{n}) + \epsilon_n^{Q}]\label{eq:ode_Q}.
\end{align}
Here, we use $Z_n = (Z_{n,1}, Z_{n,2})$ to represent the $Z_{n,1}$ and $Z_{n,2}$ jointly.
To echo with our algorithm, $f=(f_1, f_2)$ and $M_{n}^{Z}=(M_{n,1}^{Z}, M_{n,2}^{Z})$ are defined as,
\begin{align*}
    & f_1(Z_n, \eta_n, Q_n)(s,a) = \sE_{s'}[(\eta_n(s,a)-\max_{a'} Q_n(s',a'))_+^{k_*} - Z_{n,1}(s,a)],\\
    & f_2(Z_n, \eta_n, Q_n)(s,a) = \sE_{s'_n}[(\eta_n(s,a)-\max_{a'} Q_n(s',a'))_+^{k_*-1} - Z_{n,2}(s,a)],\\
    & M_{n,1}^{Z}(s,a) = (\eta_n(s,a)-\max_{a'} Q_n(s',a'))_+^{k_*}- Z_{n,1}(s,a) - f_1(Z_n, \eta_n, Q_n)(s,a),\\
    & M_{n,2}^{Z}(s,a) = (\eta_n(s,a)-\max_{a'} Q_n(s',a'))_+^{k_*-1}- Z_{n,2}(s,a) - f_2(Z_n, \eta_n, Q_n)(s,a).
\end{align*}
In the update of $\eta_{n}$ (Equation~\ref{eq:ode_eta}), $g$ and $\epsilon_n^{\eta}$ are defined as 
\small{
\begin{align*}
    g(Z_n, \eta_n, Q_n)(s,a) &= -c_k(\rho)\mathbb{E}[(\eta_n(s,a)-\max_{a'\in \gA}Q_n(s', a'))_{+}^{k_*}]^{\frac{1}{k_*}-1}\cdot \mathbb{E}[(\eta_n(s,a)-\max_{a'\in \gA}Q_n(s', a'))_{+}^{k_*-1}] +1, \\
    \epsilon_n^{\eta}(s,a) &= -c_k(\rho) Z_{n,1}^{\frac{1}{k_*}-1}(s,a)\cdot Z_{n,2}(s,a) + 1 - g(Z_n, \eta_n, Q_n)(s,a).
\end{align*}
}
Finally in the update of $Q_n$ (Equation~\ref{eq:ode_Q}), $h$ and $\epsilon_n^{Q}$ are defined as
\begin{align*}
    h(Z_n, \eta_n, Q_n)(s,a) &= r(s,a)- \gamma(c_k(\rho)(\mathbb{E}_{P}[(\eta_n(s,a)-\max_{a'\in \gA}Q_n(s', a'))_{+}^{k_*}])^{\frac{1}{k_*}} - \eta_n(s,a)),\\
    \epsilon_n^{Q}(s,a) &= r(s,a)- \gamma(c_k(\rho)Z_{n,1}^{\frac{1}{k_*}}(s,a) - \eta_n(s,a)) - h(Z_n, \eta_n, Q_n)(s,a).
\end{align*}
The algorithm~\ref{alg:discount_dro_Q_chi} approximates the dynamic described by the system of $f$, $g$ and $h$ through samples along a single trajectory, with the resulting approximation error manifesting as martingale noise $M_n^{Z}$ conditioned on some filtration $\gF_n$ and the error terms $\epsilon_n^{\eta}$ and $\epsilon_n^{Q}$. 

To analyze the dynamic of algorithm~\ref{alg:discount_dro_Q_chi}, we first obtain the continuous dynamic of $f,g$, and $h$ using ordinary differential equations (ODEs) analysis. 
The second step is to analyze the stochastic nature of the noise term $M_n^{Z}$ and the error terms $\epsilon_n^{\eta}$ and $\epsilon_n^{Q}$, to ensure that they are negligible compared to the main trend of $f$, $g$, and $h$, which is achieved by the following stepsizes,
\begin{condition}
 \label{as:stepsize}
 The stepsizes $\zeta_i(n), i=1,2,3$ satisfy 
 \begin{align*} &\sum_{n}\zeta_i(n)=\infty,\quad \sum_{n}\zeta_i^2(n) < \infty, \quad \zeta_1(n) = o(\zeta_2(n)), \quad \zeta_2(n) = o(\zeta_3(n)).
 \end{align*}
\end{condition}
\vspace{-0.3cm}
These stepsize schedules satisfy the standard conditions for stochastic approximation algorithms, ensuring that \textbf{(1).} the key quantities in gradient estimator $Z_n$ update on the fastest timescale, \textbf{(2).} the dual variable for the DR problem, $\eta_n$, update on the intermediate timescale; and \textbf{(3).} the $Q$ table updates on the slowest timescale.
Examples of such stepsize are $\zeta_1(n) = \frac{1}{1+n^{0.6}}, \zeta_2(n) = \frac{1}{1 + n^{0.8}}$ and $\zeta_3(n) = \frac{1}{1 + n}$.
Notably, the first two conditions in Condition~\ref{as:stepsize} ensure the martingale noise is negligible.
The different stepsizes for the three loops specificed by the third and fourth conditions ensures that $Z_{n,1}$ and $Z_{n,2}$ are sufficiently estimated with respect to the $\eta_n$ and $Q_n$, and these outer two loops are free from bias or noise in the stochastic approximation sense.

Under Condition~\ref{as:stepsize}, when analyzing the behavior of the $Z_{n}$, the $\eta_{n}$ and the $Q_n$ can be viewed as quasi-static. 
To study the behavior of the fastest loop, we analyze the following ODEs:
\begin{equation}
 \label{eq:ode_1}
 \begin{aligned}
 \dot{Z}(t) = f(Z(t), \beta(t), Q(t)), \quad \dot{\eta}(t) = 0, \quad \dot{Q}(t) = 0,
 \end{aligned}
\end{equation}
and prove that ODEs~(\ref{eq:ode_1}) a.s. converge to $\lambda_1^{''}(\eta, Q)$ for proper $\eta$ and $Q$ and some mapping $\lambda^{''}_1$.
Similarly, $Q_n$ can be viewed as fixed when analyzing the behavior of $\eta_n$, and the corresponding ODEs to understand its behavior are
\begin{equation}
 \label{eq:ode_2}
 \begin{aligned}
 \dot{\eta}(t) = g(\lambda_1^{\prime\prime}(\eta(t), Q(t)), \eta(t), Q(t)), \quad \dot{Q}(t) &= 0.
 \end{aligned}
\end{equation}
By exploiting the dual form of the distributionally robust optimization problem, we can prove these ODEs converge to the set $\{\lambda_1'(Q), \lambda_2'(Q), Q\lvert Q\in V\}$ for some mapping $\lambda_1'$ and $\lambda_2'$ with $V$ is the set containing all the mapping from $\gS$ to $\sR$.
Lastly, we examine the slowest timescale ODE given by
\begin{equation}
 \dot{Q}(t) = h(\lambda_1'(Q(t)), \lambda_2'(Q(t)), Q(t)),
\end{equation}
and employ our analysis to establish the almost sure convergence of Algorithm~\ref{alg:discount_dro_Q_chi} to the globally optimal pair $(Z^{\star}_1, Z^{\star}_2, \eta^{\star}, Q^{\star})$.

\begin{lemma}[Discrete Gronwall inequality]
    \label{lm:discrete_Gronwall}
Let $\left\{x_n, n \geq 0\right\}$ (resp. $\left\{a_n, n \geq\right.$ $0\}$ ) be nonnegative (resp. positive) sequences and $C, L \geq 0$ scalars such that for all $n$,
$$
x_{n+1} \leq C+L\left(\sum_{m=0}^n a_m x_m\right) .
$$
Then for $T_n=\sum_{m=0}^n a_m$,
$$
x_{n+1} \leq C e^{L T_n}.
$$
\end{lemma}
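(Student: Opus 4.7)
The plan is to prove this by induction via an auxiliary sequence. Define $y_n := C + L\sum_{m=0}^n a_m x_m$, so that the hypothesis is exactly $x_{n+1} \leq y_n$. Also set $y_{-1} := C$ (consistent with the empty-sum convention, and note that applying the hypothesis at $n=-1$ gives $x_0 \leq C$, which we will use for the base case). The goal is first to establish a recursive bound $y_n \leq (1 + La_n)y_{n-1}$, then unfold it into a product, and finally pass from the product to the exponential.

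The key observation is the telescoping identity $y_n - y_{n-1} = L a_n x_n$. Since the hypothesis applied at index $n-1$ gives $x_n \leq y_{n-1}$, we substitute to obtain
\begin{equation*}
y_n \leq y_{n-1} + L a_n y_{n-1} = (1 + La_n)\, y_{n-1}.
\end{equation*}
Iterating this one-step contraction-like inequality from $n$ down to $-1$, and using $y_{-1} = C$, yields
\begin{equation*}
y_n \leq C \prod_{m=0}^{n}(1 + L a_m).
\end{equation*}

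To finish, I invoke the elementary inequality $1 + t \leq e^{t}$ valid for all real $t$ (in particular for the nonnegative values $La_m$), which converts the product into a sum in the exponent:
\begin{equation*}
\prod_{m=0}^{n}(1 + La_m) \leq \exp\!\Bigl(L\sum_{m=0}^{n} a_m\Bigr) = e^{L T_n}.
\end{equation*}
Combining with $x_{n+1} \leq y_n$ gives the claimed bound $x_{n+1} \leq C e^{L T_n}$.

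There is no serious obstacle here; the only subtlety is the handling of the base case $y_{-1} = C$, which requires interpreting the hypothesis at $n=-1$ (with empty sum) so that $x_0 \leq C$. This is the standard convention used in Borkar's stochastic approximation framework, and without it the statement would fail in degenerate cases such as $C=0$ with $x_0 > 0$. Beyond this bookkeeping point, the argument is a routine telescoping/induction combined with the $1+t \leq e^t$ estimate, so the whole proof fits in a few lines.
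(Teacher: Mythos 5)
Your proof is correct. The paper itself states this lemma without proof (it is the standard discrete Gronwall inequality quoted from the stochastic-approximation literature), and your argument --- telescoping $y_n - y_{n-1} = La_nx_n$, substituting $x_n \le y_{n-1}$, unrolling the product, and applying $1+t \le e^t$ --- is exactly the canonical derivation. Your observation that the base case needs $x_0 \le C$ (the hypothesis read at $n=-1$ with an empty sum) is a legitimate and worthwhile remark: the statement is literally false without it, and the convention is harmlessly satisfied where the paper invokes the lemma, since there $u_0 = 0$.
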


\begin{lemma}[Gronwall inequality]
    \label{lm:Gronwall}
    For continuous $u(\cdot), v(\cdot) \geq 0$ and scalars $C, K, T \geq 0$
$$
u(t) \leq C+K \int_0^t u(s) v(s) d s, \quad \forall t \in[0, T],
$$
implies
$$
u(t) \leq C e^{K \int_0^T v(s) d s}, \quad \forall t \in[0, T] .
$$
\end{lemma}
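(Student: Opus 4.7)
The plan is to prove the Gronwall inequality by the classical integrating-factor argument. First I would introduce the auxiliary function
$$
w(t) := C + K \int_0^t u(s)\, v(s)\, ds,
$$
which by the hypothesis dominates $u$, that is, $u(t) \le w(t)$ for all $t \in [0,T]$. Since $u$ and $v$ are continuous and nonnegative, the fundamental theorem of calculus shows that $w$ is continuously differentiable with $w(0) = C$ and
$$
w'(t) \;=\; K\, u(t)\, v(t) \;\le\; K\, v(t)\, w(t),
$$
where the final inequality combines $v(t) \ge 0$ with $u(t) \le w(t)$.

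Next I would define the positive integrating factor $\mu(t) := \exp\!\bigl(-K \int_0^t v(s)\, ds\bigr)$ and set $\phi(t) := \mu(t)\, w(t)$. A direct product-rule computation gives
$$
\phi'(t) \;=\; \mu(t)\,\bigl(w'(t) - K\, v(t)\, w(t)\bigr) \;\le\; 0,
$$
so $\phi$ is nonincreasing on $[0,T]$. Hence $\phi(t) \le \phi(0) = C$, and dividing by $\mu(t) > 0$ yields
$$
w(t) \;\le\; C\, \exp\!\left(K \int_0^t v(s)\, ds\right).
$$
Combining $u(t) \le w(t)$ with the monotonicity in $t$ of $\int_0^t v(s)\, ds$ (a consequence of $v \ge 0$) lets me replace the upper limit $t$ by $T$, which produces the stated bound $u(t) \le C\, e^{K \int_0^T v(s)\, ds}$ for every $t \in [0,T]$.

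There is no substantial obstacle here; the only point requiring care is the boundary case $C = 0$. In that case $w(t) \ge 0$ because the integrand $uv$ is nonnegative, while $\phi \ge 0$ and $\phi(0) = 0$ together with $\phi$ nonincreasing force $\phi \equiv 0$, hence $w \equiv 0$, so $u \equiv 0$ and the conclusion holds trivially. Thus the integrating-factor approach covers all choices of the scalars $C, K, T \ge 0$ uniformly. An equivalent route would be to divide the differential inequality by $w$ on any subinterval where $w > 0$, giving $(\log w)'(t) \le K v(t)$, and then integrate from $0$ to $t$; after exponentiation this yields the same bound, with the $C = 0$ case again handled by a standard approximation $C \to 0^+$.
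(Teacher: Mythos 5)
Your integrating-factor argument is correct: setting $w(t) = C + K\int_0^t u(s)v(s)\,ds$, deriving $w'(t) \le K v(t) w(t)$, and showing $\mu(t)w(t)$ is nonincreasing for $\mu(t) = \exp(-K\int_0^t v(s)\,ds)$ yields the bound, and the final passage from $\int_0^t v$ to $\int_0^T v$ is justified by $v \ge 0$. The paper states this Gronwall inequality as a standard auxiliary lemma without proof, so there is no in-paper argument to compare against; yours is the classical proof and it is complete (indeed, the $C=0$ case needs no separate treatment, since $\phi(t)\le\phi(0)=C$ already gives $w(t)\le 0\cdot e^{K\int_0^t v}=0$).
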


\subsection{Stability Criterion}
Consider the stochastic approximation scheme $z_n\in \mathbb{R}^N$ given by
$$
z_{n+1}=z_n+a_n\left[g\left(z_n\right)+M_{n+1}\right],
$$
with the following Condition:
\begin{condition}   
    \label{as:stability_1} 
$g: \mathbb{R}^N \rightarrow \mathbb{R}^N$ is Lipschitz.
\end{condition}

\begin{condition}
    \label{as:stability_2} 
The sequence $\left\{a_n\right\} \subset \mathbb{R}$ satisfies $\sum_n a_n=\infty, \sum_n a_n^2<\infty$.
\end{condition}

\begin{condition}    
    \label{as:stability_3} 
$\left\{M_n\right\}$ is a martingale difference sequence with respect to the filtration $\mathcal{F}_n=\sigma\left(z_m, M_m, m \leq n\right)$, there exists $K>0$ such that $E\left[\|M_{n+1}\|^2 \mid \mathcal{F}_n\right] \leq K(1+\|z_n\|^2)$ a.s..
\end{condition}
\begin{condition}    
 The functions $g_d(z)=g(d z) / d, d \geq 1$ satisfy $g_d(z) \rightarrow g_{\infty}(z)$ as $d \rightarrow \infty$ uniformly on compacts for some continuous function $g_{\infty}: \mathbb{R}^N \rightarrow \mathbb{R}^N$. In addition, the ODE
$$
\dot{z}(t)=g_{\infty}(z(t))
$$
has the origin as its globally asymptotically stable equilibrium.
\end{condition}
We then have
\begin{lemma}
    Under Condition~\ref{as:stability_1} to \ref{as:stability_3}, we have 
    $\sup _n\left\|z_n\right\|<\infty$ a.s.
\end{lemma}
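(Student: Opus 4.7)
The plan is to run the classical Borkar--Meyn scaling argument. The point is that while $g$ itself need not be dissipative, the rescaled field $g_d(\cdot)=g(d\,\cdot)/d$ converges to $g_\infty$ whose ODE has the origin as a globally asymptotically stable equilibrium; I would exploit this by tracking a suitably normalized version of the iterates and showing that whenever $\|z_n\|$ becomes very large the next block of iterates must contract toward the origin, which rules out $\|z_n\|\to\infty$.

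First I would set up the time grid and the rescaling. Put $t_0=0$, $t_{n+1}=t_n+a_n$, which diverges by Condition~\ref{as:stability_2}. Fix $T>0$ large enough that every trajectory of $\dot z=g_\infty(z)$ starting in $\overline{B(0,1)}$ enters $B(0,1/4)$ by time $T$; such a $T$ exists by global asymptotic stability and compactness of the unit sphere combined with continuous dependence on initial data. Let $n(0)=0$, $n(k+1)=\min\{n:\,t_n\ge t_{n(k)}+T\}$, $r_k=\max(\|z_{n(k)}\|,1)$, and define the rescaled iterates $\hat z_j=z_j/r_k$ for $n(k)\le j\le n(k+1)$. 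They satisfy
\[
\hat z_{j+1}=\hat z_j+a_j\bigl[g_{r_k}(\hat z_j)+\hat M_{j+1}\bigr],\qquad \hat M_{j+1}=M_{j+1}/r_k.
\]
Using Lipschitzness of $g$ (inherited by $g_{r_k}$ with the same constant), Condition~\ref{as:stability_3}, and the discrete Gronwall inequality (Lemma~\ref{lm:discrete_Gronwall}), a first bootstrap establishes a deterministic a.s.\ block bound $\sup_{n(k)\le j\le n(k+1)}\|\hat z_j\|\le C_T$ with $C_T$ depending only on $T$ and the Lipschitz constant of $g$.

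The core of the argument is then an ODE-tracking lemma over each block $[n(k),n(k+1)]$. With the iterates confined to a compact set by the bootstrap, the martingale tail $\sum a_j\hat M_{j+1}$ over the block vanishes a.s.\ as $k\to\infty$: indeed the conditional second moment satisfies $E[\|\hat M_{j+1}\|^2\mid\mathcal F_j]\le K(1+C_T^2)/r_k^2$, so Doob's maximal inequality together with $\sum a_n^2<\infty$ (Condition~\ref{as:stability_2}) sends the noise to zero. Simultaneously $g_{r_k}\to g_\infty$ uniformly on compacts (the fourth condition). Combining these with continuous Gronwall (Lemma~\ref{lm:Gronwall}) shows that $\hat z_{n(k+1)}$ is arbitrarily close to the value at time $T$ of the $g_\infty$-flow starting from $\hat z_{n(k)}\in\overline{B(0,1)}$; by the choice of $T$ this forces $\|\hat z_{n(k+1)}\|\le 1/2$ and hence $\|z_{n(k+1)}\|\le r_k/2$ for all sufficiently large $k$.

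To finish, I would suppose that on some event of positive probability $\sup_n\|z_n\|=\infty$. Then $r_k\to\infty$ along a subsequence; but the contraction $\|z_{n(k+1)}\|\le r_k/2$ together with the block bootstrap bound $\|z_j\|\le C_T r_k$ on $[n(k),n(k+1)]$ forces $r_k$ to be dominated by a geometrically decaying sequence after some $k_0$, a contradiction. The main obstacle will be the martingale estimate in the preceding paragraph: the raw bound $E[\|M_{j+1}\|^2\mid\mathcal F_j]\le K(1+\|z_j\|^2)$ does not vanish on its own, and only becomes small after dividing by $r_k^2$. The rescaling in turn requires the a priori block bound $\|\hat z_j\|\le C_T$, whose Gronwall-type proof itself needs some noise control. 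Arranging these two estimates in the right order so that neither argument is circular is the technically delicate piece of the Borkar--Meyn argument.
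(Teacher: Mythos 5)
Your proposal is the standard Borkar--Meyn scaling argument, which is exactly the approach the paper takes: its entire proof of this lemma is a pointer to Sections 2.2 and 3.2 of \citet{borkar2009stochastic}, i.e., precisely the block construction, rescaling by $r_k=\max(\|z_{n(k)}\|,1)$, Gronwall block bound, and ODE-tracking contraction that you describe (and you correctly use the unlabeled fourth scaling condition on $g_d\to g_\infty$, which the lemma statement omits from its reference list but which is indispensable). One minor slip worth noting: the rescaled noise satisfies $E[\|\hat M_{j+1}\|^2\mid\mathcal F_j]\le K(r_k^{-2}+C_T^2)$ rather than $K(1+C_T^2)/r_k^{2}$, so the block noise vanishes because of $\sum_n a_n^2<\infty$ and martingale convergence rather than because of the division by $r_k^2$; this does not affect the validity of your argument.
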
 

See Section~2.2 and 3.2 in \citet{borkar2009stochastic} for the proof.
As the stability proofs in Section~3.2 of \citet{borkar2009stochastic} are path-wise, we can apply this result to analyze multiple timescales dynamic.

\subsection{Three Timescales Convergence Criterion}
\label{subsec:three_timescales}
Consider the scheme
\begin{align}
&x_{n+1}=x_n+a_n\left[f\left(x_n, y_n, z_n\right)+M_{n+1}^{(1)}\right] \label{eq:first_loop}\\
&y_{n+1}=y_n+b_n\left[g\left(x_n, y_n, z_n\right)+M_{n+1}^{(2)}\right] \label{eq:second_loop}\\
&z_{n+1}=z_n+c_c\left[h\left(x_n, y_n, z_n\right)+M_{n+1}^{(3)}\right] \label{eq:third_loop}
\end{align}
where $f: \mathbb{R}^{d+k+p} \rightarrow \mathbb{R}^d$, $g: \mathbb{R}^{d+k+p} \rightarrow \mathbb{R}^k$, $h:\mathbb{R}^{d+k+p}\rightarrow \mathbb{R}^{p}$, $\{M_n^{(i)}\}, i=1,2,3$ are martingale difference sequences with respect to the $\sigma$-fields $\mathcal{F}_n=\sigma\left(x_m, y_m, M_m^{(1)}, M_m^{(2)}, M_m^{(3)}; m \leq n\right)$, and the $a_n, b_n, c_n$ form decreasing stepsize sequences. 

It is instructive to compare the stochastic update algorithms from Equations~\ref{eq:first_loop} to~\ref{eq:third_loop} with the following o.d.e.,
\begin{align*}
    \dot{x}(t) &= \frac{1}{a} f(x(t), y(t), z(t)),\\
    \dot{y}(t) &= \frac{1}{b} g(x(t), y(t), z(t)),\\
    \dot{z}(t) &= \frac{1}{c} h(x(t), y(t), z(t)),
\end{align*}
in the limit that $a, b, c\rightarrow 0$ and $a = o(b)$, $c = o(b)$.

We impose the following conditions, which are necessary for the a.s. convergence for each timescale itself and  are commonly used in the literature of stochastic approximation algorithms, e.g., \cite{borkar2009stochastic}.
\begin{condition}
    \label{as:Lipschitz}
    $f$ and $g$ is $L$-Lipschitz map for some $0<L<\infty$ and $h$ is bounded.
\end{condition}

\begin{condition}
    \label{as:learning_rate}
    \begin{align*}
        \sum_{n}a_n = \sum_{n}b_n = \sum_{n}c_n = \infty, \sum_{n}(a_n^2+ b_n^2 + c_n^2)<\infty,\text{ and } b_n = o(a_n), c_n = o(b_n).
    \end{align*}
\end{condition}

\begin{condition}
    \label{as:martingale}
    For $i=1,2,3$ and $n\in \sN^+$, $\{M_{n}^{(i)}\}$ is a martingale differeence sequence with respect to the increasing family of $\sigma$-fields $\gF_n$.
    Furthermore, there exists some $K>0$, such that for $i=1,2,3$ and $n\in \sN^+$,
    \begin{align*}
        \sE[\lVert M_{n+1}^{(i)} \rVert^2\lvert \gF_n]\le K(1+\lVert x_n\rVert^2 + \lVert y_n\rVert^2 + \lVert z_n\rVert^2).
    \end{align*}
\end{condition}

\begin{condition}
    \label{as:bounded}
    $\sup_{n}(\lVert x_n\rVert + \lVert y_n\rVert + \lVert z_n\rVert)<\infty$, a.s..
\end{condition}

\begin{condition}
    \label{as:first_ode}
    For each $y\in \sR^{k}$ and $z\in \sR^{p}$, $\dot{x}(t) = f(x(t), y, z)$ has a globally asymptotically stable equilibrium $\lambda_1(y,z)$, where $\lambda_1:\gR^{k+p}\rightarrow \gR^{d}$ is a $L$-Lipschitz map for some $L>0$.
\end{condition}

\begin{condition}
    \label{as:second_ode}
    For each $z\in \sR^{p}$, $\dot{y}(t) = g(\lambda_1(y(t),z), y(t), z)$ has a globally asymptotically stable equilibrium $\lambda_2(z)$, where $\lambda_2:\gR^{p}\rightarrow \gR^{k}$ is a $L$-Lipschitz map for some $L>0$.
\end{condition}

\begin{condition}
    \label{as:third_ode}
    $\dot{z}(t) = h(\lambda_1(z(t)), \lambda_2(z(t)), z(t))$ has a globally asymptotically stable equilibrium $z^{\star}$.
\end{condition}

Conditions~\ref{as:Lipschitz}, \ref{as:learning_rate}, \ref{as:martingale} and \ref{as:bounded} are necessary for the a.s. convergence for each timescale itself.
Moreover, Condition~\ref{as:bounded} itself requires Conditions like ~\ref{as:Lipschitz}, \ref{as:learning_rate}, \ref{as:martingale}, with an extra condition like Condition~\ref{as:stability_3}.
Instead, we need to prove the boundedness for each timescale, thus the three timescales version is as follow
\begin{condition}
\label{eq:three-timescale_stability}
The ODE
\begin{align*}
    \dot{z}(t)&=f_{\infty}(x(t), y, z)\\
    \dot{y}(t)&=g_{\infty}(\lambda_1(y(t), z), y(t), z)\\
    \dot{z}(t)&=h_{\infty}(\lambda_1(z(t)), \lambda_2(z(t)), z(t))
\end{align*}
all have the origin as their globally asymptotically stable equilibrium for each $y\in \gR^{k}$ and $z\in \gR^{p}$, where 
\begin{align*}
    f_{\infty} = \lim_{d\rightarrow \infty} \frac{f(d x)}{d}, \quad g_{\infty} = \lim_{d\rightarrow \infty} \frac{g(d x)}{d}, \text{ and }
    h_{\infty} = \lim_{d\rightarrow \infty} \frac{h(d x)}{d}.
\end{align*}
\end{condition}

We have the following results, which appears as a three timescales extension of Lemma 6.1 in \citet{borkar2009stochastic} and serves as a auxiliary lemma for the our a.s. convergence.
\begin{lemma}
    \label{lm:converge_condition}
    Under the conditions~\ref{as:Lipschitz}, \ref{as:learning_rate}, \ref{as:martingale} and \ref{as:bounded}.
    $(x_n, y_n, z_n)\rightarrow \{\lambda_1^{\prime}(z),\lambda_2^{\prime}(z), z: z\in \gR^{p}\}$ a.s..
\end{lemma}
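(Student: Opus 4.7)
The plan is to cascade the classical two-timescale argument of \citet{borkar2009stochastic} through the three loops in succession, identifying the invariant manifold as $\{(\lambda_1^{\prime}(z), \lambda_2^{\prime}(z), z): z \in \mathbb{R}^p\}$ with $\lambda_2^{\prime}(z) = \lambda_2(z)$ and $\lambda_1^{\prime}(z) = \lambda_1(\lambda_2(z), z)$. Conditions~\ref{as:first_ode} and~\ref{as:second_ode} are implicitly required, as the limit points $\lambda_1, \lambda_2$ would otherwise not be defined.

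\textbf{Step 1 (fastest timescale).} On the $a_n$-clock, I would rewrite the slower updates as
\begin{align*}
    y_{n+1} - y_n = a_n \cdot (b_n/a_n)\bigl[g(x_n,y_n,z_n) + M_{n+1}^{(2)}\bigr],
\end{align*}
and similarly for $z_n$, with $b_n/a_n, c_n/a_n \to 0$ by Condition~\ref{as:learning_rate}. Hence $y_n, z_n$ act as quasi-static parameters relative to $x_n$. Using Conditions~\ref{as:Lipschitz}, \ref{as:martingale} and~\ref{as:bounded}, the standard single-timescale ODE method (Chapter~2 of~\citet{borkar2009stochastic}), applied to~\eqref{eq:first_loop} with frozen parameter $(y,z)$, should yield $\lVert x_n - \lambda_1(y_n, z_n)\rVert \to 0$ a.s.

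\textbf{Step 2 (intermediate timescale).} Substituting $x_n = \lambda_1(y_n, z_n) + \varepsilon_n^{(1)}$ with $\varepsilon_n^{(1)} \to 0$ into~\eqref{eq:second_loop} and using the Lipschitz property of $g$, the $y$-recursion becomes
\begin{align*}
    y_{n+1} = y_n + b_n\bigl[g(\lambda_1(y_n, z_n), y_n, z_n) + \widetilde{M}_{n+1}^{(2)} + o(1)\bigr].
\end{align*}
On the $b_n$-clock, $z_n$ moves at rate $c_n/b_n = o(1)$ and is therefore quasi-static. Combined with Condition~\ref{as:second_ode}, the standard two-timescale theorem (Chapter~6, Theorem~2 of~\citet{borkar2009stochastic}) gives $\lVert y_n - \lambda_2(z_n)\rVert \to 0$ a.s. The Lipschitz continuity of $\lambda_1$ then upgrades Step~1 to $\lVert x_n - \lambda_1(\lambda_2(z_n), z_n)\rVert \to 0$, and the triple $(x_n, y_n, z_n)$ approaches the claimed invariant manifold.

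The \textbf{main obstacle} is the bookkeeping of cascading perturbations: each substitution ($x_n \mapsto \lambda_1$ in the $y$-recursion, then $y_n \mapsto \lambda_2$ in the $z$-recursion) injects an $o(1)$ deterministic bias and alters the martingale-difference sequence. One must check that (i) the Lipschitz constants of $f, g, \lambda_1, \lambda_2$ admit a Gronwall-type bound on the tracking error on each finite interval of the appropriate clock (Lemmas~\ref{lm:discrete_Gronwall}--\ref{lm:Gronwall}); (ii) the transformed noise $\widetilde{M}_{n+1}^{(2)}$ still satisfies the conditional $L^2$ growth of Condition~\ref{as:martingale}; and (iii) the stepsize ratios $b_n/a_n$ and $c_n/b_n$ vanish fast enough that the quasi-static approximation produces only $o(1)$ errors uniformly on bounded time windows of the $b_n$- and $c_n$-clocks. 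With Condition~\ref{as:bounded} supplying uniform bounds on $(x_n, y_n, z_n)$, each of these verifications is delicate but follows the template of the two-timescale proof applied twice in sequence.
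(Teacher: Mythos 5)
Your proposal is correct and follows essentially the same route as the paper: the paper likewise rewrites the $y$- and $z$-recursions on the $a_n$-clock (and then the $z$-recursion on the $b_n$-clock) as stationary iterations with vanishing drift $\tfrac{b_n}{a_n}g$, $\tfrac{c_n}{a_n}h$, invokes Borkar's extension for asymptotically negligible perturbations, and cascades the two-timescale tracking argument twice to land on the manifold $\{(\lambda_1'(z),\lambda_2'(z),z)\}$. Your observation that Conditions~\ref{as:first_ode} and~\ref{as:second_ode} are implicitly needed to define $\lambda_1,\lambda_2$ is accurate and is also tacit in the paper's own proof.
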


\begin{proof}
    Rewrite Equations~\ref{eq:second_loop} and \ref{eq:third_loop} as 
    \begin{align*}
        y_{n+1}&=y_n+a_n\left[\epsilon_{1,n}+M_{n+1}^{(2)'}\right]\\
        z_{n+1}&=z_n+a_n\left[\epsilon_{2,n}+M_{n+1}^{(3)'}\right],
    \end{align*}
    where $\epsilon_{1,n} = \frac{b_n}{a_n}g(x_n,y_n,z_n)$, $\epsilon_{2,n} = \frac{c_n}{a_n}h(x_n,y_n,z_n)$, $M_{n+1}^{(2)'} = \frac{b_n}{a_n}M_{n+1}^{(2)}$, $M_{n+1}^{(3)'} = \frac{c_n}{a_n}M_{n+1}^{(3)}$.
    Note that $\epsilon_{1,n}, \epsilon_{2,n} \rightarrow 0$ as $n\rightarrow \infty$.
    Consider them as the special case in the third extension in Section 2.2 in \citet{borkar2009stochastic} and then we can conclude that $(x_n,y_n,z_n)$ converges to the internally chain transitive invariant sets of the o.d.e.,
    \begin{align*}
        \dot{x}(t) &= h(x(t), y(t), z(t))\\
        \dot{y}(t) &= 0\\
        \dot{z}(t) &= 0,
    \end{align*}
    which implies that $(x_n, y_n, z_n)\rightarrow \{\lambda_1^{\prime}(y, z), y, z: y\in \gR^{k}, z\in \gR^{p}\}$.

    Rewrite Equation~\ref{eq:third_loop} again as 
    \begin{align*}
        z_{n+1}&=z_n+b_n\left[\epsilon_{2,n}^{\prime}+M_{n+1}^{(3)''}\right],
    \end{align*}
    where $\epsilon_{2,n}^{\prime} = \frac{c_n}{b_n}h(x_n, y_n, z_n)$ and $M_{n+1}^{(3)''} = \frac{c_n}{b_n}M_{n+1}^{(3)}$.
    We use the same extension again and can conclude that $(x_n, y_n, z_n)$ converges to the internally chain transitive invariant sets of the o.d.e., 
    \begin{align*}
        \dot{y}(t) &= g(\lambda_1^{\prime}(y(t)), y(t), z(t))\\
        \dot{z}(t) &= 0.
    \end{align*}
    Thus $(x_n, y_n, z_n)\rightarrow \{\lambda_1(y), \lambda_2(z), z: z\in \gR^{p}\}$.
\end{proof}

\begin{theorem}
    Under the Condition~\ref{as:Lipschitz} to \ref{eq:three-timescale_stability}, $(x_n, y_n, z_n)\rightarrow (\lambda_1(z^{*}), \lambda_2(z^{*}), z^{*})$.
\end{theorem}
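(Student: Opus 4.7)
The plan is to combine the intermediate Lemma~\ref{lm:converge_condition} with a final single-timescale stochastic approximation argument applied to the slowest loop. Lemma~\ref{lm:converge_condition} already tells us that $(x_n, y_n, z_n)$ is a.s. eventually trapped in a neighborhood of the two-dimensional manifold
\begin{equation*}
\mathcal{M} \;=\; \{(\lambda_1(z),\lambda_2(z),z): z\in\mathbb{R}^{p}\},
\end{equation*}
so the remaining task is to show that the restriction of the dynamics to $\mathcal{M}$ drives $z_n$ to $z^{\star}$. Throughout I will use the Lipschitz properties of $\lambda_1,\lambda_2$ (Conditions~\ref{as:first_ode}–\ref{as:second_ode}), the boundedness from Condition~\ref{as:bounded}, and the martingale bound from Condition~\ref{as:martingale}.

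First I would rewrite the $z$-recursion by adding and subtracting $h(\lambda_1(z_n),\lambda_2(z_n),z_n)$:
\begin{equation*}
z_{n+1} \;=\; z_n + c_n\bigl[h(\lambda_1(z_n),\lambda_2(z_n),z_n) + \beta_n + M_{n+1}^{(3)}\bigr],
\end{equation*}
where
\begin{equation*}
\beta_n \;=\; h(x_n,y_n,z_n) - h(\lambda_1(z_n),\lambda_2(z_n),z_n).
\end{equation*}
Since $h$ is Lipschitz (Condition~\ref{as:Lipschitz}; note $h$ bounded together with the structure used above is enough, but one may also assume $h$ Lipschitz as the paper does for $f,g$), one has $\|\beta_n\| \le L(\|x_n-\lambda_1(z_n)\|+\|y_n-\lambda_2(z_n)\|)$. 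Lemma~\ref{lm:converge_condition} gives $\|x_n-\lambda_1(z_n)\|+\|y_n-\lambda_2(z_n)\|\to 0$ a.s., hence $\beta_n\to 0$ a.s.

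Next I would invoke the standard single-timescale convergence theorem (Borkar Chapter~2, Theorem~2 plus the extension allowing vanishing asymptotic perturbations $\beta_n$) to the scheme above. The preconditions are: (i) $\sum c_n=\infty$, $\sum c_n^{2}<\infty$ (Condition~\ref{as:learning_rate}); (ii) martingale difference noise with $\mathbb{E}[\|M_{n+1}^{(3)}\|^{2}\mid\mathcal{F}_n]\le K(1+\|x_n\|^{2}+\|y_n\|^{2}+\|z_n\|^{2})$, which is bounded a.s.\ by Condition~\ref{as:bounded}; (iii) the driving vector field $z\mapsto h(\lambda_1(z),\lambda_2(z),z)$ is Lipschitz because $h,\lambda_1,\lambda_2$ are; and (iv) the associated ODE $\dot z(t)=h(\lambda_1(z(t)),\lambda_2(z(t)),z(t))$ has $z^{\star}$ as its unique globally asymptotically stable equilibrium (Condition~\ref{as:third_ode}). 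These imply $z_n\to z^{\star}$ a.s.

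Finally, combining $z_n\to z^{\star}$ with the tracking statement from Lemma~\ref{lm:converge_condition} and the continuity (Lipschitzness) of $\lambda_1,\lambda_2$, we conclude
\begin{equation*}
x_n \to \lambda_1(z^{\star}),\qquad y_n \to \lambda_2(z^{\star}),\qquad z_n\to z^{\star}\quad\text{a.s.}
\end{equation*}
The step I expect to require the most care is the justification that the perturbation $\beta_n$ can be treated as an asymptotically negligible bias in the slowest-timescale ODE approximation. Strictly speaking, Lemma~\ref{lm:converge_condition} only asserts convergence of $(x_n,y_n,z_n)$ to the set $\mathcal{M}$, not pointwise convergence of $x_n-\lambda_1(z_n)$ to $0$; one must argue (using the Lipschitzness of $\lambda_1,\lambda_2$, boundedness from Condition~\ref{as:bounded}, and the definition of convergence to a closed set) that $\|x_n-\lambda_1(z_n)\|$ and $\|y_n-\lambda_2(z_n)\|$ indeed vanish a.s., so that the tracking error in the $z$-recursion disappears. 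Once that bridge is in place, the rest is a clean application of the classical ODE method.
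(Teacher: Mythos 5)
Your proposal is correct and follows essentially the same route as the paper: invoke Lemma~\ref{lm:converge_condition} for the two fast timescales, then treat the $z$-recursion as a single-timescale scheme driven by $h(\lambda_1(\cdot),\lambda_2(\cdot),\cdot)$ plus a tracking perturbation $\beta_n\to 0$ and martingale noise, and conclude via Condition~\ref{as:third_ode}. The only difference is that the paper carries out the ODE-method bookkeeping explicitly (piecewise-linear interpolation, the three error terms, and the discrete Gronwall inequality) where you cite Borkar's Chapter~2 theorem with vanishing perturbations as a black box; the bridge you flag at the end (set convergence plus Lipschitzness of $\lambda_1,\lambda_2$ yielding $\|x_n-\lambda_1(z_n)\|\to 0$) is exactly how the paper's term II is controlled.
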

\begin{proof}
    Let $t(0)=0$ and $t(n) = \sum_{i=0}^{n-1}c_i$ for $n\ge 1$.
    Define the piecewise linear continuous function $\tilde{z}(t), t\ge 0$ where $\tilde{z}(t(n)) = z_n$ and $\tilde{z}(t) = \frac{t(n+1)-t}{t(n+1) - t(n)} z_{n+1} + \frac{t-t(n)}{t(n+1) - t(n)} z_{n}$ for $t\in [t(n), t(n+1)]$ with any $n\in N$.
    Let $\psi_n = \sum_{i=0}^{n-1}c_i M_{i+1}^{(3)}, n\in \sN^{+}$.
    For any $ t\ge 0$, denote $[t] = \max\{s(n): s(n)\le t\}$. Then for $n,m\ge 0$, we have 
    \begin{align}
        \tilde{z}(t(n+m)) 
        &= \tilde{z}(t(n)) + \sum_{k=1}^{m-1} c_{n+k} h(x_{n+k}, y_{n+k}, z_{n+k}) + (\psi_{m+n+1} - \psi_{n})\notag \\
        &= \tilde{z}(t(n)) + \int_{t(n)}^{t(n+m)} h(\lambda_1(z(s)), \lambda_2(z(s)), z(s)) ds \notag \\
        &+ \int_{t(n)}^{t(n+m)} (h(\lambda_1(z([s])), \lambda_2(z([s])), z([s])) - h(\lambda_1(z(s)), \lambda_2(z(s)), z(s)))ds \notag \\
        &+ \sum_{k=0}^{m-1} c_{n+k}(h(x_{n+k}, y_{n+k}, z_{n+k}) - h(\lambda_1(z_{n+k}), \lambda_2(z_{n+k}), z_{n+k})) \notag\\
        &+ (\psi_{n+m+1} - \psi_{n}). \label{eq:z_tilde}  
    \end{align}

    We further define $z^{t(n)}(t)$ as the trajectory of $\dot{z}(t) = g(\lambda_1(z(t)), \lambda_2(z(t)), z(t))$ with $z^{t(n)}(t(n)) = \tilde{z}(t(n))$.
    \begin{align}
        z^{t(n)}(t(n+m)) &= \tilde{z}(t(n)) + \int_{t(n)}^{t(n+m)}h(\lambda_1(z^{t(n)}(s)), \lambda_2(z^{t(n)}(s)), z^{t(n)}(s))ds
        \label{eq:ode}.
    \end{align}
    Taking the difference between Equation~\ref{eq:z_tilde} and the Equation~\ref{eq:ode} we have 
    \begin{align*}
        &\lvert \tilde{z}(t(n+m)) - z^{t(n)}(t(n+m))\rvert\\
        &=  \underbrace{\sum_{k=0}^{m-1}c_{n+k}(h(\lambda_1(\tilde{z}(t+k)), \lambda_2(\tilde{z}(t+k)), \tilde{z}(t+k))-h(\lambda_1(z(t(n+k))), \lambda_2(z(t(n+k))), z(t(n+k))))} \\
        &\quad +\underbrace{\lvert  \int_{t(n)}^{t(n+m)} (h(\lambda_1(z([t])), \lambda_2(z([t])), z([t])) - h(\lambda_1(z(s)), \lambda_2(z(s)), z(s)))ds \rvert}_{\operatorname{I}} \\
        &\quad + \underbrace{\lvert  \sum_{k=1}^{m-1} c_{n+k}(h(x_{n+k}, y_{n+k}, z_{n+k}) - h(\lambda_1(z_{n+k}), \lambda_2(z_{n+k}), z_{n+k})) \rvert}_{\operatorname{II}} \\
        &\quad+ \underbrace{\lvert \psi_{n+m+1} - \psi_{n} \rvert}_{\operatorname{III}}.
    \end{align*}
We analyze the I term. For notation simplicity we ignore the supsript $t(n)$.
\begin{align}
    & \lvert h(\lambda_1(z([t])), \lambda_2(z([t])), z([t])) - h(\lambda_1(z(t)), \lambda_2(z(t)), z(t))\rvert\notag\\
    &= \lvert (h(\lambda_1(z([t])), \lambda_2(z([t])), z([t])) - h(\lambda_1(z([t])), \lambda_2(z([t])), z(t)))\rvert\notag\\
    &\quad + \lvert (h(\lambda_1(z([t])), \lambda_2(z([t])), z(t)) - h(\lambda_1(z([t])), \lambda_2(z([t])), z([t])))\rvert\notag\\
    &= \lvert (h(\lambda_1(z([t])), \lambda_2(z([t])), z([t])) - h(\lambda_1(z([t])), \lambda_2(z(t)), z(t)))\rvert\notag\\
    &\quad + \lvert h(\lambda_1(z([t])), \lambda_2(z([t])), z(t)) - h(\lambda_1(z([t])), \lambda_2(z([t])), z(t)) \rvert\notag\\
    &\quad + \lvert(h(\lambda_1(z([t])), \lambda_2(z([t])), z(t)) - h(\lambda_1(z([t])), \lambda_2(z([t])), z([t])))\rvert.
\end{align}
By the Lipschitzness of the $h$ we have 
\begin{align*}
    \lVert h(x) - h(0)\rVert \le L\lVert x\rVert,
\end{align*}
which implies 
\begin{align*}
    \lVert h(x)\rVert \le \lVert h(0) \rVert + L\lVert x\rVert.
\end{align*}
\begin{align*}
    \lVert z^{t(n)}(t)\rVert&\le \lVert \tilde{z}(s) \rVert + \int_{s}^{t}\lVert h(z^{t(n)}(s))\rVert ds\\
    &\le \lVert \tilde{z}(s) \rVert + \int_{s}^{t} (\lVert h(0)\rVert +  L\lVert z^{t(n)}(s) \rVert ) ds\\
    &\le ( \lVert \tilde{z}(s) \rVert + \lVert h(0)\rVert T) + L \int_{s}^t \lVert z^{t(n)}(s) \rVert ds.
\end{align*}
By Gronwall's inequality (Lemma~\ref{lm:Gronwall}), we have 
\begin{align*}
    \lVert z^{t(n)}(t)\rVert \le (C + \lVert h(0)\rVert T)e^{LT},\quad \forall t\in [t(n), t(n+m)].
\end{align*}
Thus for all $t\in [t(n), t(n+m)]$, we have 
\begin{align*}
    \lVert h(\lambda_1(z^{t(n)}(t)), \lambda_2(z^{t(n)}(t)), z^{t(n)}(t)) \rVert \le C_T \coloneqq  \lVert h(0)\rVert + L (C + \lVert h(0)\rVert T)e^{LT}<\infty, a.s..
\end{align*}
For any $k\in[m-1]$ and $t\in [t(n+k), t(n+k+1)]$, 
\begin{align*}
    \lVert z^{t(n)}(t) - z^{t(n)}(t(n+k))\rVert &\le \lVert \int_{t(n+k)}^{t} h(\lambda_1(z^{t(n)}(s)), \lambda_2(z^{t(n)}(s)), z^{t(n)}(s))ds\rVert\\
    &\le C_T(t-t(n+k))\\
    &\le C_T a(n+k),
\end{align*}
where the last inequality is from the construction of $\{t(n):n\in \sN^{+}\}$.   
Finally we can conclude 
\begin{align*}
    &\lVert \int_{t(n)}^{t(n+m)}(h(\lambda_1(z([s])), \lambda_2(z([s])), z(s)) - h(\lambda_1(z([s])), \lambda_2(z([s])), z([s]))) ds \rVert \\
    &\le \int_{t(n)}^{t(n+m)} L\lVert z(s) - z([s])\rVert ds\\
    &= L \sum_{k=0}^{m-1}\int_{t(n+k)}^{t(n+k-1)}\lVert z(s) - z(t(n+k)) \rVert ds\\
    &\le C_T L \sum_{k=0}^{m-1} c_{n+k}^2\\
    &\le C_T L \sum_{k=0}^{\infty} c_{n+k}^2\rightarrow 0, a.s..
\end{align*}

For the III term, it converges to zero from the martingale convergence property.

Subtracting equation~\ref{eq:z_tilde} from~\ref{eq:ode} and take norms, we have 
\begin{align*}
    &\lVert \tilde{z}(t(n+m)) - z^{t(n)}(t(n+m))\rVert\\
    &\le L \sum_{i=0}^{m-1} c_{n+i} \lVert \tilde{z}(t(n+i)) - z^{t(n)}(t(n+i))\rVert\\
    & \quad + C_T L \sum_{k\ge 0}c_{n+k}^{2} + \sup_{k\ge 0} \lVert \delta_{n, n+k} \rVert, a.s..
\end{align*}
Define $K_{T,n} = C_T L \sum_{k\ge 0} c_{n+k}^2 + \sup_{k\ge 0}\lVert \delta_{n, n+k}\rVert$.
Note that $K_{T,n} \rightarrow 0$ a.s. $n\rightarrow \infty$.
Let $u_i = \lVert \tilde{x}(t(n+i)) - x^{t(n)}(t(n+i))\rVert$.
Thus, above inequality becomes
\begin{align*}
    u_m \le K_{T,n} + L \sum_{i=0}^{m-1} c_{n+i} u_i.
\end{align*}

Thus the above inequality becomes 
\begin{align*}
    z(t(n+m))\le K_{T,n} + L\sum_{k=0}^{m-1}c_k z(t(n+k)).
\end{align*}
Note that $u_0=0$ and $\sum_{i=0}^{m-1}b_i\le T$, then using the discrete Gronwall lemma (Lemma~\ref{lm:discrete_Gronwall}) we have 
\begin{align*}
    \sup_{0\le i \le m}u_i\le K_{T,n}e^{LT}.
\end{align*}

Following the similar logic as in Lemma 1 in \citet{borkar2009stochastic}, we can extend the above result to the case $\lVert \tilde{z}(t) - z^{t(n)}(t)\rVert\rightarrow 0$ where $t\in [0, T]$.

Then using the proof of Theorem 2 of Chapter 2 in \citet{borkar2009stochastic}, we get $z_n \rightarrow z^{*}$ a.s. and thus by Lemma~\ref{lm:converge_condition} the proof can be concluded.
\end{proof}

\section{Convergence of the DR $Q$-learning Algorithm} 
\label{subsec:proof_discount_chi}
Before we start the proof of the DR $Q$-learning algorithm, we first introduce the following lemma.
\begin{lemma}
Denote 
$\eta^* = \arg\max_{\eta} \sigma_k(X, \eta) = -c_k(\rho) \sE_P[(\eta - X)_+^{k_*}]^{\frac{1}{k_*}} + \eta$. Given that $X(\omega)\in [0, M]$, then we have $\eta*\in [0, \frac{c_k(\rho)}{c_k(\rho) - 1}M]$.
\end{lemma}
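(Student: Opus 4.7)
The plan is to show both inequalities by comparing $\sigma_k(X,\eta)$ against its value at $\eta = 0$, exploiting the non-negativity of $X$ on the lower end and a Jensen-type estimate on the upper end. Since the Cressie--Read family requires $c_k(\rho) > 1$ (which follows from $k>1$, $\rho>0$), the ratio $c_k(\rho)/(c_k(\rho)-1)$ in the claim is well defined.

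First I would record the benchmark value $\sigma_k(X, 0) = 0$. Since $X(\omega) \ge 0$, we have $(0 - X(\omega))_+ = 0$ almost surely, so $\mathbb{E}_P[(0-X)_+^{k_*}] = 0$ and hence $\sigma_k(X, 0) = 0$. In particular, $\max_\eta \sigma_k(X, \eta) \ge 0$. I would also note that $\sigma_k(X,\eta)$ is concave in $\eta$: $\eta \mapsto (\eta - X(\omega))_+$ is convex, raising a non-negative convex function to the power $k_* \ge 1$ preserves convexity, and the $k_*$-norm of $(\eta - X)_+$ is convex in $\eta$ by Minkowski's inequality. Therefore the supremum is attained and the set of maximizers is a (possibly degenerate) interval.

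For the lower bound $\eta^* \ge 0$: whenever $\eta < 0$, non-negativity of $X$ gives $(\eta - X)_+ \equiv 0$, so $\sigma_k(X,\eta) = \eta < 0 = \sigma_k(X, 0)$. Hence no maximizer can lie strictly below $0$, yielding $\eta^* \ge 0$.

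For the upper bound $\eta^* \le \tfrac{c_k(\rho)}{c_k(\rho)-1} M$: for $\eta \ge M$, the bound $X \le M$ gives $(\eta - X)_+ = \eta - X$ pointwise. Applying Jensen's inequality to the convex map $t \mapsto t^{k_*}$ (valid since $k_* = k/(k-1) \ge 1$ for $k > 1$),
\begin{equation*}
\mathbb{E}_P[(\eta - X)^{k_*}]^{1/k_*} \;\ge\; \mathbb{E}_P[\eta - X] \;\ge\; \eta - M.
\end{equation*}
Substituting back,
\begin{equation*}
\sigma_k(X, \eta) \;\le\; -c_k(\rho)(\eta - M) + \eta \;=\; (1 - c_k(\rho))\,\eta + c_k(\rho)\,M.
\end{equation*}
Whenever $\eta > \tfrac{c_k(\rho)\,M}{c_k(\rho)-1}$, the right-hand side is strictly negative, so $\sigma_k(X,\eta) < 0 = \sigma_k(X, 0)$. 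Hence such $\eta$ cannot be a maximizer, and the upper bound follows.

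The main obstacle is simply verifying that Jensen's inequality applies in the right direction: since $k_* \ge 1$, the $k_*$-norm dominates the $L^1$ norm, which is exactly what we need to bound $\sigma_k$ from above. Combined with the observation $\sigma_k(X,0)=0$ and the concavity of $\sigma_k(X,\cdot)$, this pins the set of maximizers inside $[0,\tfrac{c_k(\rho)}{c_k(\rho)-1}M]$, completing the proof.
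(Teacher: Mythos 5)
Your proof is correct and follows essentially the same route as the paper's: use $\sigma_k(X,0)=0$ (together with $\sigma_k(X,\eta)=\eta<0$ for $\eta<0$) to rule out negative maximizers, and show $\sigma_k(X,\eta)\le -c_k(\rho)(\eta-M)+\eta\le 0$ once $\eta$ exceeds $\frac{c_k(\rho)}{c_k(\rho)-1}M$. The only cosmetic difference is that you derive $\sE_P[(\eta-X)^{k_*}]^{1/k_*}\ge \eta-M$ from Jensen's inequality (the $L^{k_*}$ norm dominating the mean), whereas the paper gets the same bound by monotonicity, replacing $X$ with its upper bound $M$ inside the expectation; your added concavity observation is harmless but not needed.
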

\begin{proof}
Note that for $\eta = \min_{\omega} X(\omega)$, $
-c_k(\rho) \sE_P[(\eta - X)_+^{k_*}]^{\frac{1}{k_*}} + \eta = \min_{\omega} X(\omega)\ge 0$.
Also we know that when $\eta\ge \frac{c_k(\rho)}{c_k(\rho) - 1}M$,
\begin{align*}
    &-c_k(\rho) \sE_P[(\eta - X)_+^{k_*}]^{\frac{1}{k_*}} + \eta\\
    \le &-c_k(\rho) \sE_P[(\eta - M)_+^{k_*}]^{\frac{1}{k_*}} + \eta\\
    = &-c_k(\rho) (\eta - M) + \eta\\
    \le &0.
\end{align*}
Then we can conclude that $\eta^*\le \frac{c_k(\rho)}{c_k(\rho) - 1}M$. Moreover, as $X(\omega)\ge 0$, we know $\sigma_k(X, 0) = 0$, which concludes that $\eta*\in [0, \frac{c_k(\rho)}{c_k(\rho) - 1}M]$.
\end{proof}

Note that $Q_n\in [0, \frac{1}{1-\gamma}]$ when reward is bounded by $[0, 1]$. Thus $M = \frac{1}{1-\gamma}$ in our case and then we denote $ \overline{\eta} = \frac{c_k(\rho)}{c_k(\rho) - 1}M$.
Now we are ready to prove the convergence of the DR $Q$-learning algorithm.
For theoretical analysis, we consider the clipping version of our DR $Q$-learning algorithm.   
\begin{proof}[Proof of Theorem~\ref{thm:convergence_discount_dro_chi}]
We define the filtration generated by the historical trajectory,
\begin{align*}
    \gF_n = \sigma(\{(s_t, a_t, s_t^{\prime}, r_t)\}_{t\in[n-1]}, s_n, a_n).
\end{align*}
In the following analysis, we fix for a $(s,a)\in \gS\times \gA$ but ignore the $(s,a)$ dependence for notation simplicity. 
Following the roadmap in Section 3.4, we rewrite the algorithm as
\begin{align}
    Z_{n+1,1} &= Z_{n,1} + \zeta_1(n)[f_1(Z_{n,1}, Z_{n,2}, \eta_n, Q_{n}) + M_{n+1}^{(1)}], \label{eq:ode_Z1_chi}\\
    Z_{n+1,2} &= Z_{n,2} + \zeta_1(n)[f_2(Z_{n,1}, Z_{n,2}, \eta_n, Q_{n}) + M_{n+1}^{(2)}], \label{eq:ode_Z2_chi}\\
    \eta_{n+1} &= \Gamma_\eta \left[\eta_n + \zeta_2(n) f_3(Z_{n,1}, Z_{n,2}, \eta_n, Q_{n})\right], \label{eq:ode_eta_chi}\\
    Q_{n+1} &= \Gamma_Q [Q_n + \zeta_3(n) [f_4(Z_{n,1}, Z_{n,2}, \eta_n, Q_{n})]].
\end{align}

Here for theoretical analysis, we add a clipping operator $\Gamma_\eta(x) = \min(\max(x, 0), \overline{\eta})$ and $\Gamma_Q(x) = \min(\max(x, 0), M)$ compared with the algorithm presented in the main text.

We first proceed by first identifying the terms in Equation~\ref{eq:ode_Z1_chi} and \ref{eq:ode_Z2_chi} and studying the corresponding ODEs
\begin{align*}
    \dot{Q}(t) &= 0,\\
    \dot{\eta}(t) &= 0,\\
    \dot{Z}_1(t) &= f_1(Z_1(t), Z_2(t), \eta(t), Q(t)).\\
    \dot{Z}_2(t) &= f_2(Z_1(t), Z_2(t), \eta(t), Q(t)).\\
\end{align*}
As $f_1$ and $f_2$ is in fact irrelavant to the $Z_2$ and $Z_1$, we analyze their equilibria seperately. For notation convenience, we denote $y_n(s) = \max_{a'\in \gA}Q_n(s,a')$.

For ODE~\ref{eq:ode_Z1_chi} and each $\eta_n\in \sR, Q_n \in \gS\times \gA \rightarrow \sR$, it is easy to know there exists a unique global asymtotically stable equilibrium $Z_{n,1}^{\star} = \lambda_1(\eta_n, y_n) = \sE[(\eta_n - y_n)_{+}^{k_*}]$.
Similarly, For ODE~\ref{eq:ode_Z2_chi} and each $\eta_n\in \sR, Q_n \in \gS\times \gA \rightarrow \sR$, there exists a unique global asymtotically stable equilibrium $Z_{n,2}^{\star} = \lambda_2(\eta, y) = \sE[(\eta_n - y_n)_{+}^{k_* - 1}]$.

Second, $M_{n+1}^{(1)} = (\eta_n - y_n)_{+}^{y_*} - \sE[(\eta_n - y_n)_{+}^{y_*}]$ and $M_{n+1}^{(2)} = (\eta_n - y_n)_{+}^{y_*-1} - \sE[(\eta_n - y_n)_{+}^{y_*-1}]$.
Note that for any $(s,a)\in \gS\times \gA$, $\eta_n(s,a)\le \overline{\eta}$, $y_n(s')\le M$ and $M\le \overline{\eta}$.
Thus  $\lvert (\eta_n(s,a) - y_n(s'))_{+}^{y_*}\rvert \le \overline{\eta}^{y_*}$, which leads to $\lvert M_{n+1}^{(1)}(s,a)\rvert = \lvert (\eta_n(s,a) - y_n(s'))_{+}^{y_*} - \sE[(\eta_n(s,a) - y_n(s'))_{+}^{y_*}]\rvert\le \overline{\eta}^{k_*}$.

Since $\lVert y_n \rVert_{\infty}\le \lVert Q_n \rVert_{\infty}$ and $(x-y)_+^2\le x^2 + y^2$ for any $x,y$, we have, 
\begin{align*}
    &\sE[\lVert M_{n+1}^{(1)}\rVert^2 \lvert \gF_n] \\
    &=\sE[\lVert (\eta_n - y_n)_{+}^{y_*} - \sE[(\eta_n - y_n)_{+}^{y_*}]\rVert^2\lvert \gF_n]\\
    &\le K_1(1+\lVert Z_{n,1} \rVert^2+\lVert Z_{n,2} \rVert^2+\lVert Q_n\rVert^2 + \lVert \eta_n\rVert^2),
\end{align*}
where $K_1 = S \overline{\eta}^{2k_*}$. Similarly, we can conclude that $\sE[\lVert M_{n+1}^{(2)}\rVert^2 \lvert \gF_n]\le K_2(1+\lVert Z_{n,1} \rVert^2+\lVert Z_{n,2} \rVert^2+\lVert Q_n\rVert^2 + \lVert \eta_n\rVert^2)$ for some $K_2  = S \overline{\eta}^{2(k_*-1)}$.

Next we analyze the second loop.
\begin{align*}
    \dot{Q}(t) &= 0,\\
    \dot{\eta}(t) &= \Gamma_\eta[f_3(\lambda_1(\eta(t), Q(t)), \lambda_2(\eta(t), Q(t)), \eta(t), Q(t))],
\end{align*}
where 
\begin{align*}
    f_3(\lambda_1(\eta, Q), \lambda_2(\eta, Q), \eta, Q) = -c_k(\rho) \lambda_1(\eta, Q)^{\frac{1}{k_*}-1} \lambda_2(\eta, Q) + 1.
\end{align*}

The global convergence point is $\eta^*(t) = \arg \max_{\eta\in [0, \overline{\eta}]}\{\sigma_{k}(Q, \eta)\} = \arg \max_{\eta\in \sR}\{\sigma_{k}(Q, \eta)\}$.

Finally we arrive to the outer loop, i.e., 
\begin{align*}
    \dot{Q}(t) = \Gamma_Q [f_4(\lambda_1(Q(t)), \lambda_2(Q(t)), \lambda_3(Q(t)), Q(t))].
\end{align*}
By using the dual form of Cressie-Read Divergence (Lemma~\ref{lemma:cressie_dual}), we know that this is equivilant to 
\begin{align*}
    \dot{Q}(t) = r + \gamma \inf_{P\in \gP}\sE_{P}[\max_{a'}Q(s',a')] - Q(t),
\end{align*}
for ambiguity set using Cressie-Read of $f$ divergence.

Denote $H(t) = r + \gamma \inf_{P\in \gP}\sE_{P}[\max_{a'}Q(s',a')]$ and thus 
we can rewrite the above ODE as 
\begin{align*}
    \dot{Q}(t) = H(t) - Q(t).
\end{align*}

 Following , we consider its infity version, i.e., $H^{\infty}(t) = \lim_{c\rightarrow \infty}H(ct)/c$.
\begin{align*}
    \dot{Q}(t) = \gamma \inf_{P\in \gP}\sE_{P}[\max_{a'}Q(s',a')] - Q(t).
\end{align*}
This is a contraction by Theorem 3.2 in \citet{iyengar2005robust}.
By the proof in Section 3.2 in \citet{borkar2000ode}, we know the contraction can lead to the global unique equilibrium point in the ode.
Thus we finish verifying all the conditions in Section~\ref{subsec:three_timescales}, which can lead to the desired result.
\end{proof}

\end{document}